\definecolor{TakeawayBack}{HTML}{EEF5FF}  
\definecolor{TakeawayFrame}{HTML}{7BA3D7} 
\newtcolorbox{takeaway}[1][]{
  enhanced, breakable,
  colback=TakeawayBack,
  colframe=TakeawayFrame,
  boxrule=0.5pt, arc=2pt,
  left=8pt, right=8pt, top=6pt, bottom=6pt,
  before skip=0.8\baselineskip, after skip=0.8\baselineskip,
  coltitle=black, fonttitle=\bfseries,
  colbacktitle=TakeawayBack,
  attach title to upper,           
  after title={\quad},             
  title={#1}                       
}
\numberwithin{equation}{section}
\theoremstyle{plain} 
\newtheorem{theorem}{Theorem}[section]
\newtheorem{lemma}[theorem]{Lemma}
\newtheorem{corollary}[theorem]{Corollary}
\newtheorem{proposition}[theorem]{Proposition}
\theoremstyle{definition} 
\newtheorem{definition}[theorem]{Definition}
\theoremstyle{remark} 
\crefname{theorem}{theorem}{theorems}
\Crefname{theorem}{Theorem}{Theorems}
\crefname{lemma}{lemma}{lemmas}
\Crefname{lemma}{Lemma}{Lemmas}
\crefname{corollary}{corollary}{corollaries}
\Crefname{corollary}{Corollary}{Corollaries}
\crefname{proposition}{proposition}{propositions}
\Crefname{proposition}{Proposition}{Propositions}
\crefname{claim}{claim}{claims}
\Crefname{claim}{Claim}{Claims}
\crefname{definition}{definition}{definitions}
\Crefname{definition}{Definition}{Definitions}
\crefname{assumption}{assumption}{assumptions}
\Crefname{assumption}{Assumption}{Assumptions}
\crefname{observation}{observation}{observations}
\Crefname{observation}{Observation}{Observations}
\crefname{remark}{remark}{remarks}
\Crefname{remark}{Remark}{Remarks}
\newcommand{\cblock}[3]{
  \hspace{-1.5mm}
  \begin{tikzpicture}
    [
    node/.style={square, minimum size=10mm, thick, line width=0pt},
    ]
    \node[fill={rgb,255:red,#1;green,#2;blue,#3}] () [] {};
  \end{tikzpicture}%
}
\def\eqref#1{equation~\ref{#1}}
\def\1{\bm{1}}
\newcommand{\fullmethod}{query-only test-time-training} 
\newcommand{\method}{query-only TTT} 
\newcommand{\shortmethod}{qTTT}
\newcommand{\para}[1]{\vspace{0.5em}\noindent\textbf{#1}}
\newtcolorbox{bluebox}[1][]{panelcommon,
  colframe=blue!70!black, colback=blue!3!white, #1}
\newtcolorbox{redbox}[1][]{panelcommon,
  colframe=red!70!black,  colback=red!3!white,  #1}
\lstdefinelanguage{json}{
  basicstyle=\ttfamily\small,
  showstringspaces=false,
  breaklines=true,
  frame=single,
  morestring=[b]",
  stringstyle=\color{teal!60!black},
}
\title{Let's (not) just put things in Context:\\Test-Time Training for Long-Context LLMs}
\author[2\ 3\ *]{Rachit Bansal}
\author[4\ *]{Aston Zhang}
\author[5\ *]{\\Rishabh Tiwari}
\author[1]{Lovish Madaan}
\author[6\ *]{Sai Surya Duvvuri}
\author[6\ *]{Devvrit Khatri}
\author[1]{\\David Brandfonbrener}
\author[2\ 3]{David Alvarez-Melis}
\author[1]{Prajjwal Bhargava}
\author[1]{Mihir Sanjay Kale}
\author[2]{Samy Jelassi}
\affiliation[1]{Meta}
\affiliation[2]{Harvard University}
\affiliation[3]{Kempner Institute at Harvard}
\affiliation[4]{OpenAI}
\affiliation[5]{UC Berkeley}
\affiliation[6]{UT Austin}
\abstract{
\looseness=-1
Progress on training and architecture strategies
have enabled LLMs with millions of tokens in context length.
However, empirical evidence suggests that
such long-context LLMs can \emph{consume}
far more text than they can reliably \emph{use}.
%
On the other hand, it has been shown that 
inference-time compute can be used
to scale performance of LLMs,
often by generating thinking tokens,
on challenging tasks involving
multi-step reasoning.
Through controlled experiments on sandbox long-context
tasks, we find that such inference-time strategies
show rapid diminishing returns, and fail at long context.
We attribute these failures to \emph{score dilution},
a phenomenon inherent to static self-attention.
Further, we show that
current inference-time strategies
cannot  
retrieve relevant long-context signals
under certain conditions.
We propose \emph{\fullmethod{}} (\shortmethod{})
that, through targeted gradients updates on the given context,
provably overcomes limitations of static self-attention.
We find that this simple shift in how
inference-time compute is spent
leads to consistently large performance improvements
across models and long-context benchmarks.
\shortmethod{} leads to massive 12.6\% and
14.1\% points improvements for Qwen3-4B on
average across subsets of
LongBench-v2 and ZeroScrolls benchmarks.
The takeaway is practical: for long context, a small amount of context‑specific training is a better use of inference compute than current inference-time scaling strategies like producing more thinking tokens.
}
\begin{document}

\maketitle
\section{Introduction}
\label{sec:intro}

\looseness=-1
Many ambitious LLM use-cases are rooted in long context: analyzing scientific corpora~\citep{katz2023natural,taylor2022galactica}, synthesizing books~\citep{kryscinski2021booksum}, maintaining rich multi-turn histories~\citep{park2023generative,zhou2023webarena}, and reasoning over large multi-file code repositories~\citep{jimenez2024swebench,zhang2023repocoder}.
Recent progress in pre-training and architectural strategies
have enabled context windows with millions of tokens~\citep{yang_rope_2025, ding2402longrope, reid2024gemini,anthropic2024}.
In practice, however, persistent failure modes remain: models miss clauses buried in lengthy documents, overlook function definitions deep in repositories, or fail to retrieve facts from prior turns even when the relevant content is present ``in context''~\citep{liu2023lost,hsieh2024ruler,kamradt2024needle}.

\looseness=-1
Concurrently, there is a growing interest in using
inference-time compute to overcome limitations
of vanilla transformer models.
Methods such as chain-of-thought ``thinking" tokens
\citep{wei2023chain},
best-of-$n$ \citep{nakano2021webgpt,stiennon2020learning}, and other ``thinking" strategies \citep{zelikman2024quiet}
have shown promise.
However, all these methods generate additional tokens
with the same static attention mechanism that is already under-allocating mass to the evidence.

\looseness=-1
We design two realistic sandbox tasks to perform
controlled experiments
and diagnose long-context failure modes.
We identify that 
standard ``in-context only'' 
settings fail with growing context length (\autoref{fig:figure_1}).
We formalize this as a limitation of static,
finite-precision self-attention,
and term it \emph{score dilution}:
In presence of ``distractor'' tokens,
logit on a ``target'' is insufficiently separated
from the distractor logits,
weakening the target probability mass. 
We establish that as context length $T$ grows,
the target–distractor logit margin
must scale as $\Omega(\log T)$ to avoid vanishing target probability.
We extend this analysis
to show that vanilla compute-scaling strategies, 
such as ``thinking'' tokens,
cannot retrieve the signal from buried target tokens.

\looseness=-1
Hence, a natural question arises:
\textit{How can we best use inference-time compute to improve long-context retrieval and reasoning?}
We revisit test-time training (TTT) \citep{liu_ttt_2021, hardt_test-time_2024, akyurek_surprising_2025} as a way to adapt the model to a given long-context input rather than produce more text from an unchanged model.
Our key idea, \emph{\method{}} (\emph{\shortmethod{}}), is a computationally frugal approach: 
%
%
Perform a single prefill to cache keys and values,
followed by a few \textbf{lightweight gradient updates exclusively on the query projection matrices} in the attention layers,
keeping all other parameters fixed
and reusing the key-value cache (\autoref{fig:qttt}).
We show theoretically that this targeted adaptation directly increases the separation between target and distractor logits for the specific context at hand, counteracting the limitations of vanilla in-context learning.

\looseness=-1
We perform evaluations on 15+ real-world datasets
from popular long-context benchmarks,
ZeroScrolls~\citep{shaham2023zeroscrolls} and LongBench-v2~\citep{bai2023longbench}, with Qwen3 models spanning $1.7$B–$8$B parameters.
We observe consistently large performance gains
across model sizes and datasets.
Under FLOP-matched inference-time compute budgets, \shortmethod{} consistently surpasses standard inference-time thinking strategies (\autoref{fig:figure_1_c}) with more than $20\%$ improvements
on code comprehension, multi-document QA,
and other multi-hop reasoning tasks.
Our results call for reallocating inference-time budget
from thousands of ``thinking'' tokens to a small number of query updates
for long-context retrieval and reasoning
without altering pre-training, architecture, or data.

\begin{figure*}[t]
\centering
\begin{subfigure}[t]{0.31\textwidth}
\centering
\vskip 0pt
    \hspace*{-.2cm}\includegraphics[height=3.3cm]{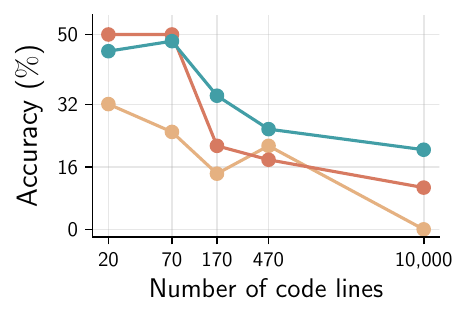}
    \vspace{.05cm}
    \small
    {\hspace*{.5cm}\caption{Bug tracing in code repository}}
    \label{fig:in_distribution_copy}
\end{subfigure}
\hfill
\begin{subfigure}[t]{0.31\textwidth}
\vskip 0pt
\hspace*{-.3cm}\includegraphics[height=3.3cm]{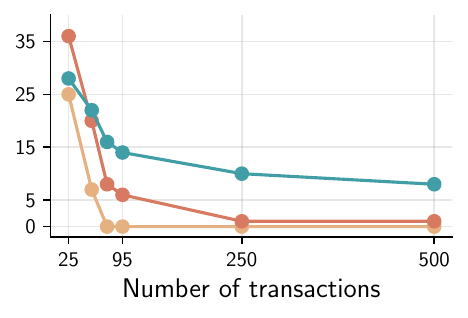}
\small
\mbox{\hspace*{-3.5cm}\cblock{229}{177}{129} \hspace{0.5mm}In-Context Only\hspace{3mm} \cblock{215}{122}{97} \hspace{0.5mm}With Thinking\hspace{3mm} \cblock{66}{158}{166}\hspace{1mm}With Query-only Test-Time Training (qTTT)}
\caption{Bug tracing in transaction logs}
\label{fig:length_gen}
\end{subfigure}
\hfill
\begin{subfigure}[t]
{0.31\textwidth}
\centering
\vskip 0pt
\includegraphics[height=3.2cm]{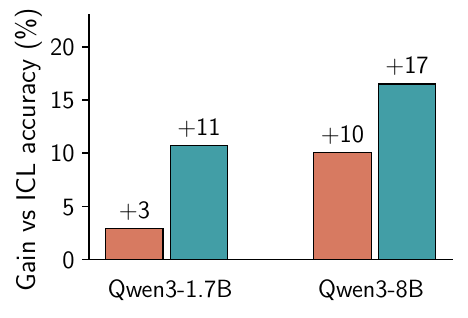}
\centering
\small
\vspace{0.55cm}
\caption{\label{fig:figure_1_c}LongBench-v2 + ZeroScrolls}
\end{subfigure}
\caption{\label{fig:figure_1}
Query-only test-time training uses inference-time compute more effectively than ``thinking'' tokens for long contexts. \textbf{(a, b)} We construct two tasks to perform controlled long-context analysis: (a) bug localization in large code repositories, and (b) anomaly detection in transaction logs. As context length $T$ grows, in-context accuracy drops and thinking tokens show diminishing returns; with the same FLOP budget, \shortmethod{} consistently improves performance. \textbf{(c)} \shortmethod{} shows improvements across domains and model sizes on LongBench-v2 and ZeroScrolls benchmarks.
}
\end{figure*}

\para{Contributions.}
\begin{itemize}[leftmargin=*]
\item
We construct sandbox tasks to demonstrate long-context failure modes (\S\ref{sec:empirical-limits}).
We formalize \emph{score dilution} in static, finite-precision self-attention and prove a \emph{logarithmic margin requirement}: the target–distractor logit gap must scale as $\Omega(\log T)$ to avoid vanishing target probability (\S\ref{subsec:theoretical-limits}). \looseness=-1
\item
We show theoretically and empirically that current inference-time compute scaling strategies primarily scale decoding and cannot reliably meet the margin requirement; in particular, they cannot amplify the signal from buried targets beyond an $\varepsilon$-fraction (\S\ref{sec:failures}).
\item
We introduce \method{} (\shortmethod{}): a compute-frugal TTT procedure that performs one prefill to cache K/V, then applies a few gradient updates \emph{only} to query projections while reusing the KV cache, directly increasing target–distractor separation (\S\ref{sec:method}).
\item
On 15+ real-world datasets from ZeroScrolls and LongBench-v2, using Qwen3 models (1.7B–8B), \method{} consistently improves long-context performance and under FLOP-matched budgets, outperforms intermediate thinking-token baselines (\autoref{fig:figure_1_c}; \S\ref{sec:results}).
\end{itemize}

\looseness=-1
Since \shortmethod{} takes place at inference-time,
it can easily be applied on top of
other existing strategies for long-context modeling:
architectural changes such as sliding window attention~\citep{dai2019transformerxl,beltagy2020longformer}, adaptive positional encoding \citep{press2021alibi,su2024roformer}, training tweaks for longer windows \citep{chen2023extending,peng2023yarn},
or retrieval augmented generation~\citep{,borgeaud2022retro,izacard2022atlas}.

\section{Vanilla Compute-Scaling Strategies Fail for Long Contexts}
\label{sec:failures}

\looseness=-1
In this section, we analyze how increasing context length $T$ affects static quadratic-attention LLMs and common inference-time compute–scaling strategies. Using controlled synthetic tasks that mirror realistic long-context retrieval, we observe sharp performance degradation as $T$ grows, while generating intermediate ``thinking" tokens yields rapidly diminishing returns. We then provide a theoretical explanation: with static, finite-precision self-attention, the target logit suffers \emph{score dilution} as distractors accumulate, and avoiding this requires a \emph{logarithmic margin requirement}---the worst-case target–distractor logit gap must scale as $\Omega(\log T)$. Decoding-based inference strategies do not reliably meet this requirement; in contrast, small gradient-based adaptations can increase the margin, which motivates our methodology (developed in \S\ref{sec:method}). All proofs are provided in~\autoref{app:proofs-long-context}.

\subsection{Empirical Analysis on Synthetic Long-Context Tasks}
\label{sec:empirical-limits}

\looseness=-1
First, we empirically analyze the effect of context length
on vanilla transformer models
and current inference-time compute-scaling strategies.
We study two synthetic retrieval tasks that
mirror realistic long-context
use cases while allowing control over the context length $T$.
For each example, the relevant evidence (``needle")
is held fixed and only the surrounding ``haystack" grows,
isolating the effect of length on retrieval.
We provide examples from our datasets in~\autoref{sec:dataset-examples}.

\para{Bug Localization in a Code Repository.}
Starting from a large open-source repository\footnote{
We use OLMo as a reference repository for the dataset: \url{https://github.com/allenai/OLMo}.
}, we inject a single-line logical bug and ask the model to identify and fix it. Examples of bugs include missing softmax temperature scaling in the attention mechanism and layernorm misplacement in the Transformer block (see Appendix for details). 
We vary the context length by the number of lines
$L$ exposed to the model.
For a given bug instance,
we sample a span of $L$
lines around the bug,
extending to other files in the directory
for large $L$.
We create splits of the dataset
with $L$ ranging from $5$ to $10000$.
Across length conditions,
the bug location and content are held fixed;
only the surrounding code (the ``haystack”)
grows to introduce realistic,
semantically relevant distractors.

\para{Error in a Log of Transactions.}
We synthesize multi-account banking logs
with an initial state and a sequence of operations,
each line recording old$\rightarrow$new
balances and indexed with a \texttt{TX\_ID}.
Valid logs must satisfy invariants:
conservation of total funds,
non-negative balances, and arithmetic correctness.
We inject exactly one anomaly and consider the following bug types: \texttt{CALC\_ERROR} (incorrect arithmetic),
\texttt{NEGATIVE\_BAL} (over-debit),
\texttt{LOST\_UPDATE} (stale write overwrites a prior commit) and \texttt{DUPLICATE\_TXN} (same payment applied twice). 
The model must output the bug type and offending \texttt{TX\_ID}.
Context length is controlled by the number of operations $n$;
we sweep from $25$ operations to $500$ operations which varies the number of tokens from $\mathcal{O}(10^2)$ to $\mathcal{O}(10^4)$. 

\para{Findings.}
We evaluate Qwen3 models ranging from
$1.7$B to $8$B parameters on these synthetic tasks.
\autoref{fig:figure_1} shows the results
for the Qwen3-4B model.
For both tasks,
we see clear consistent trends:
(i) As the context lengths increases
(number of code lines/transaction logs),
the standard in-context performance
(i.e., without any additional inference-time compute)
decreases sharply.
(ii) Further, using inference-time compute
via thinking tokens improves performance
for shorter contexts,
but shows clear diminishing returns
as the context length increases,
asymptotically converging
close to the standard model performance for long contexts.


\begin{takeaway}[Empirical Takeaway:]{Across both controlled tasks, holding the needle fixed and increasing the haystack length $T$ yields a sharp, monotonic drop in \emph{in-context} accuracy. Allocating inference-time budget to ``thinking" tokens offers only short-horizon gains with clear saturation at large $T$. These trends suggest a structural limitation of static attention in long contexts.}
\end{takeaway}

We now formalize this limitation as \emph{score dilution} and derive the resulting \emph{logarithmic margin requirement}, which explains why decoding-based scaling fails to recover retrieval (§\ref{subsec:theoretical-limits}).

\subsection{Preliminaries}
Recall, for a sequence of $T$ tokens with hidden representations $\{h_i\}_{i=1}^T \in \mathbb{R}^d$, each Transformer layer $\ell$ computes query, key, and value projections:
\begin{align}
    q_i^{(\ell)} &= W_Q^{(\ell)} h_i, \quad k_j^{(\ell)} = W_K^{(\ell)} h_j, \quad v_j^{(\ell)} = W_V^{(\ell)} h_j ,
\end{align}
where $W_Q^{(\ell)}, W_K^{(\ell)} \in \mathbb{R}^{d_k \times d}$ and $W_V^{(\ell)} \in \mathbb{R}^{d_v \times d}$ are learned projection matrices. 
Further, the scaled dot product between query $q_i$ and key $k_j$ gives the attention logits $z_{i,j}$ that are normalized via softmax to obtain attention weights $\alpha_{i,j}$. Finally, the output $o_i$ is a weighted sum of value vectors:
\begin{align}\label{eq:attn}
    z_{i,j} \coloneqq \frac{q_i^\top k_j}{\sqrt{d_k}},
\qquad
\alpha_{i,j} \coloneqq \frac{\exp(z_{i,j})}{\sum_{\ell=1}^T \exp(z_{i,\ell})},
\qquad
o_i=\sum_{j=1}^T \alpha_{i,j}v_j.
\end{align}
In the autoregressive setting, causal masking enforces $j \leq i$, so that each position $i$ can only aggregate information from its past. Multi-head attention extends this computation across several subspaces, allowing the model to capture diverse forms of dependency.

\para{In-Context Learning.}
This attention-based retrieval is the foundation of \emph{in-context learning} (ICL; \citep{dong2024surveyincontextlearning}).
By inserting task demonstrations, instructions, or relevant passages directly into the input, LLMs can adapt their outputs without parameter updates.
For applications such as analyzing codebases, synthesizing long documents, or sustaining multi-turn dialogues, the model must effectively identify and use information scattered across contexts of length $10^4$–$10^6$ tokens.

\para{Thinking Tokens.}
Given a prefix $x_{1:i}$ and a target at position $i{+}1$, \emph{thinking-token} methods~\citep{wei2022chainofthought,kojima2022large,wang2023selfconsistency} append $M\!\ge\!0$ auxiliary tokens at indices $t\in\{i{+}1,\dots,i{+}M\}$ before producing the final answer at $a\!=\!i{+}M{+}1$. Each token $t$ is generated with static parameters and the same attention kernel as in~\cref{eq:attn}, yielding logits $z_{t,j}$, weights $\alpha_{t,j}$, and outputs $o_t$ over the augmented sequence of length $T' \!=\! T{+}M$.

\looseness=-1
\begin{definition}[{Retrieval}]\label{def:retrieval}
\label{def:tau-retrieval}
When predicting token $x_{i+1}$, the relevant information may lie in a specific key–value pair $(k_{j^\ast}, v_{j^\ast})$ (the `\textit{needle}') at some earlier position $j^\ast < i$. For a threshold $\tau\in(0,1)$, we say that retrieval at position $i$ succeeds if $\alpha_{i,j^\star}\;\ge\;\tau.$ Equivalently, in margin form define
$\gamma_i \;\coloneqq\; z_{i,j^\star}\;-\;\log\!\sum_{j\neq j^\star} e^{z_{i,j}},$
then retrieval succeeds iff
\[
\gamma_i \;\ge\; \log\!\Big(\frac{\tau}{1-\tau}\Big).
\]
All other positions $j\neq j^\star$ are \emph{distractors}, contributing competing logits $\{z_{i,j}\}_{j\neq j^\star}$.
\end{definition}

\subsection{Theoretical Limitations of Static Attention and Thinking Tokens}
\label{subsec:theoretical-limits}

Informed by the empirical findings in~\S\ref{sec:empirical-limits},
we now analyze a single attention layer as in \cref{eq:attn} on the retrieval task (Definition~\ref{def:retrieval}).
We formalize the fundamental challenge of score dilution,
which arises when ``near-tie'' distractors inflate
the softmax denominator, causing even a unique maximal logit to receive vanishingly small attention mass.

\begin{lemma}[Score dilution]
\label{lem:score-dilution}
If at least $m$ distractor keys satisfy $z_{i,j}\ge z_{i,j^\star}-\Delta$ for some $\Delta\ge 0$, then
\[
\alpha_{i,j^\star}\;\le\; \frac{1}{1+m e^{-\Delta}}.
\]
In particular, if $m\ge cT$ for some $c>0$ and $\Delta=O(1)$, then $\alpha_{i,j^\star}\to 0$ as $T\to\infty$.
\end{lemma}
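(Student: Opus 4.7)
The plan is to establish the inequality by a direct lower bound on the softmax denominator, using exactly the set of distractors the hypothesis gives us. Writing $\alpha_{i,j^\star} = e^{z_{i,j^\star}} / \sum_{\ell=1}^T e^{z_{i,\ell}}$, I would isolate three contributions in the denominator: the target term $e^{z_{i,j^\star}}$, the contribution of the $m$ guaranteed near-tie distractors in the set $S \coloneqq \{j \neq j^\star : z_{i,j} \ge z_{i,j^\star} - \Delta\}$, and the remaining nonnegative terms from the other positions, which I simply drop. This gives the chain
\[
\sum_{\ell=1}^T e^{z_{i,\ell}} \;\ge\; e^{z_{i,j^\star}} + \sum_{j \in S} e^{z_{i,j}} \;\ge\; e^{z_{i,j^\star}} + m\, e^{z_{i,j^\star} - \Delta} \;=\; e^{z_{i,j^\star}}\bigl(1 + m e^{-\Delta}\bigr),
\]
where the second inequality uses that each $j \in S$ satisfies $z_{i,j} \ge z_{i,j^\star} - \Delta$ and $|S| \ge m$. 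Dividing both sides of the softmax definition by this lower bound on the denominator and cancelling $e^{z_{i,j^\star}}$ yields $\alpha_{i,j^\star} \le 1/(1 + m e^{-\Delta})$, which is the first claim.

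For the asymptotic consequence, I would substitute $m \ge cT$ and note that under $\Delta = O(1)$ the factor $e^{-\Delta}$ is bounded below by a positive constant, so $m e^{-\Delta} \ge c' T$ for some $c' > 0$. Then $\alpha_{i,j^\star} \le 1/(1 + c' T) \to 0$ as $T \to \infty$, establishing the dilution phenomenon.

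There is essentially no hard step here; the whole argument is an application of monotonicity of the exponential and dropping nonnegative terms in a sum. The only place to exercise care is the direction of the inequalities: we need a \emph{lower} bound on the denominator to obtain an \emph{upper} bound on $\alpha_{i,j^\star}$, which is why discarding the remaining $T - 1 - m$ exponentials (rather than trying to bound them above) is the right move. No use of causal masking or of the value vectors $v_j$ is needed; the statement is purely about the attention weights produced by the softmax over logits.
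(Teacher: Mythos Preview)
Your proposal is correct and follows essentially the same approach as the paper: define the set $S$ of near-tie distractors, lower-bound the softmax denominator by $e^{z_{i,j^\star}}(1+me^{-\Delta})$ using monotonicity of $\exp$ and nonnegativity of the discarded terms, then divide. Your asymptotic paragraph even adds a small clarification (that $\Delta=O(1)$ implies $e^{-\Delta}$ is bounded below by a positive constant) that the paper leaves implicit.
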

This lemma formalizes a simple intuition: When a constant fraction of tokens are within $O(1)$ logit of the needle, the attention budget cannot concentrate and the needle’s mass vanishes with $T$.

This dilution effect imposes a strict requirement on how much the target logit must stand out from all other distractors. The following corollary quantifies this necessary separation, showing that the required margin between needle and distractor must grow with the context length.
\begin{lemma}[Logarithmic margin requirement]
\label{lem:log-margin}
Fix $\varepsilon\in(0,1)$. If
\[
\min_{j\neq j^\star}\big(z_{i,j^\star}-z_{i,j}\big)\;\ge\;\log\!\Big(\frac{(T-1)(1-\varepsilon)}{\varepsilon}\Big),
\]
then $\alpha_{i,j^\star}\ge 1-\varepsilon$.
In particular, guaranteeing a fixed target mass against worst-case distractors requires a gap that scales as $\Omega(\log T)$.
\end{lemma}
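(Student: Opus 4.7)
The plan is to reduce the claim to a direct softmax computation, leveraging the monotone relationship between the worst-case target--distractor margin and $\alpha_{i,j^\star}$ that was already made explicit in \autoref{lem:score-dilution}. Setting $\Delta := \min_{j \neq j^\star}(z_{i,j^\star} - z_{i,j})$, the hypothesis says every distractor logit is bounded above by $z_{i,j^\star} - \Delta$, so $\sum_{j \neq j^\star} e^{z_{i,j}} \le (T-1)\,e^{z_{i,j^\star} - \Delta}$. Substituting into the softmax from \eqref{eq:attn} and cancelling the common factor $e^{z_{i,j^\star}}$ yields
\[
\alpha_{i,j^\star} \;\ge\; \frac{1}{1 + (T-1)\,e^{-\Delta}},
\]
which is exactly \autoref{lem:score-dilution} instantiated with $m = T-1$ distractors.

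Next I would impose the requirement $\alpha_{i,j^\star} \ge 1 - \varepsilon$. Rearranging the previous display, this is equivalent to $(T-1)\,e^{-\Delta} \le \varepsilon/(1-\varepsilon)$, and solving for $\Delta$ gives the stated threshold $\Delta \ge \log\!\bigl((T-1)(1-\varepsilon)/\varepsilon\bigr)$. That establishes the sufficient-condition half of the lemma in essentially one line.

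For the $\Omega(\log T)$ remark I would expand $\log\!\bigl((T-1)(1-\varepsilon)/\varepsilon\bigr) = \log(T-1) + \log\!\bigl((1-\varepsilon)/\varepsilon\bigr)$ and note that with $\varepsilon$ held fixed, the first term dominates as $T \to \infty$. To argue necessity and not merely sufficiency, I would exhibit the worst-case distractor configuration in which all $T-1$ competing logits equal $z_{i,j^\star} - \Delta$ exactly: there the inequality above becomes an equality, so any smaller $\Delta$ forces $\alpha_{i,j^\star} < 1-\varepsilon$, proving the bound is tight up to an additive $O(1)$ term depending only on $\varepsilon$.

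There is no real obstacle here---the proof reduces to one algebraic manipulation---so the care is expository. The main thing to get right is framing: this lemma should be presented as the quantitative converse of \autoref{lem:score-dilution}, and the $\Omega(\log T)$ scaling must be explicitly advertised as a \emph{worst-case} statement over distractor configurations rather than an instance-specific one. That framing is what makes the lemma useful later for ruling out decoding-time remedies such as thinking tokens, since those strategies cannot perturb the logits of past positions and therefore cannot close a margin that is already structurally too small.
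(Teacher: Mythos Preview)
Your proof is correct and essentially identical to the paper's: both bound $\sum_{j\ne j^\star}e^{z_{i,j}}\le(T-1)e^{z_{i,j^\star}-\Delta}$, substitute into the softmax to get $\alpha_{i,j^\star}\ge 1/(1+(T-1)e^{-\Delta})$, and rearrange. One minor quibble: the lower bound you derive is not ``exactly \autoref{lem:score-dilution} instantiated with $m=T-1$''---that lemma gives an \emph{upper} bound on $\alpha_{i,j^\star}$, the opposite direction---so drop that cross-reference; your derivation already stands on its own.
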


\looseness=-1
Achieving a margin that scales logarithmically
is difficult for a model with static attention.
Next, we evaluate the strategy
of generating
thinking tokens
in satisfying 
the logarithmic margin requirement.

\begin{proposition}[Needle-signal bound for generated tokens]
\label{prop:needle-signal}
For any thinking token $t\in\{i{+}1,\dots,i{+}M\}$ and any $u\in\mathbb{R}^{d_v}$,
\[
\big\langle u,\, o_t \big\rangle
\;\le\;
\alpha_{t,j^\star}\,\big\langle u, v_{j^\star}\big\rangle
\;+\;
\big(1-\alpha_{t,j^\star}\big)\,\max_{j\neq j^\star}\big\langle u, v_j\big\rangle.
\]
\end{proposition}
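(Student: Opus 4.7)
The plan is to observe that $\langle u, o_t\rangle$ is, by construction, a convex combination of the inner products $\{\langle u, v_j\rangle\}_j$ weighted by the attention distribution $\{\alpha_{t,j}\}_j$ at the thinking position $t$, and then to replace every non-needle contribution by the worst-case scalar $\max_{j\neq j^\star}\langle u, v_j\rangle$. That single replacement immediately yields the stated upper bound, so the proof is essentially a one-line convexity argument dressed up with the right bookkeeping.

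Concretely, first I would expand
\[
\langle u, o_t\rangle \;=\; \sum_{j} \alpha_{t,j}\,\langle u, v_j\rangle
\]
using the definition $o_t=\sum_j\alpha_{t,j}v_j$ in the attention equation, where the sum ranges over positions $j\le t$ made visible by causal masking in the augmented sequence of length $T'=T+M$. Since $j^\star<i<t$, the needle index is included. Second, I would peel off the needle term and group the remaining indices as
\[
\langle u, o_t\rangle \;=\; \alpha_{t,j^\star}\,\langle u, v_{j^\star}\rangle \;+\; \sum_{j\neq j^\star} \alpha_{t,j}\,\langle u, v_j\rangle .
\]
Third, since $\{\alpha_{t,j}\}_{j\neq j^\star}$ is non-negative and sums to $1-\alpha_{t,j^\star}$, the second sum is a weighted average of $\{\langle u, v_j\rangle\}_{j\neq j^\star}$ with total mass $1-\alpha_{t,j^\star}$; upper bounding each term by $\max_{j'\neq j^\star}\langle u, v_{j'}\rangle$ and pulling the maximum outside gives exactly the stated inequality, uniformly in $t\in\{i+1,\dots,i+M\}$ and $u\in\mathbb{R}^{d_v}$.

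There is no genuinely hard step; the only subtlety is bookkeeping. One must verify that the attention index really ranges over the full augmented sequence, so that earlier thinking tokens enter only as extra distractors and never replace the needle, and that the argument does not use any particular value of $\alpha_{t,j^\star}$. This makes the bound tight in the obvious sense, attained whenever all non-needle values $\langle u, v_j\rangle$ coincide with $\max_{j'\neq j^\star}\langle u, v_{j'}\rangle$, which is precisely the regime the sequel will exploit to argue that generating more thinking tokens cannot amplify the needle signal beyond an $\varepsilon$-fraction.
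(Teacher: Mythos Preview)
Your proposal is correct and matches the paper's own proof essentially step for step: both expand $o_t$ as a convex combination over the attention weights, separate the needle term, and then upper-bound the residual non-needle part by its maximum, with the paper merely introducing the normalized weights $\tilde\alpha_{t,j}=\alpha_{t,j}/(1-\alpha_{t,j^\star})$ to make the convex-combination structure explicit before bounding. Your additional remarks on bookkeeping (causal masking, $j^\star<i<t$, tightness) are accurate and go slightly beyond what the paper writes, but do not change the argument.
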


\begin{corollary}[Specialization under small margin]
\label{cor:needle-signal-eps}
If the margin at token $t$ satisfies $\gamma_t \le \log\!\big(\varepsilon/(1-\varepsilon)\big)$ (equivalently, $\alpha_{t,j^\star}\le \varepsilon$ by Definition~\ref{def:retrieval}), then
\[
\big\langle u,\, o_t \big\rangle
\;\le\;
\varepsilon\,\big\langle u, v_{j^\star}\big\rangle
\;+\;
(1-\varepsilon)\,\max_{j\neq j^\star}\big\langle u, v_j\big\rangle.
\]
Moreover, by Lemma~\ref{lem:score-dilution}, if at least
$m$ distractors satisfy $z_{t,j}\!\ge\! z_{t,j^\star}-\Delta$, then $\alpha_{t,j^\star}\le 1/(1+m e^{-\Delta})$, yielding the same bound with $\varepsilon\!=\!1/(1+m e^{-\Delta})$.
\end{corollary}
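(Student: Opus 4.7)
The plan is to assemble this from Proposition~\ref{prop:needle-signal} and the parenthetical hypothesis in two short steps. First I would dispatch the equivalence: by Definition~\ref{def:tau-retrieval}, the condition $\alpha_{t,j^\star}\ge \tau$ is equivalent to $\gamma_t \ge \log(\tau/(1-\tau))$, so applying this at $\tau=\varepsilon$ and taking contrapositives immediately gives $\alpha_{t,j^\star}\le \varepsilon$ iff $\gamma_t \le \log(\varepsilon/(1-\varepsilon))$. This reduces the corollary to working from the attention-weight bound $\alpha_{t,j^\star}\le\varepsilon$.

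Next I would invoke Proposition~\ref{prop:needle-signal} to obtain
\[
\langle u, o_t\rangle \;\le\; \alpha_{t,j^\star}\,A \,+\, (1-\alpha_{t,j^\star})\,B,
\qquad A \coloneqq \langle u, v_{j^\star}\rangle,\ \ B\coloneqq \max_{j\neq j^\star}\langle u, v_j\rangle.
\]
Rewriting the right-hand side as $B+\alpha_{t,j^\star}(A-B)$ makes clear that it is monotone non-decreasing in $\alpha_{t,j^\star}$ whenever $A\ge B$, which is the natural regime for a readout direction $u$ aligned with the needle value. Substituting the upper bound $\alpha_{t,j^\star}\le\varepsilon$ then yields $\langle u, o_t\rangle \le \varepsilon A+(1-\varepsilon)B$, exactly the stated inequality.

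For the \textbf{moreover} clause, I would simply chain in Lemma~\ref{lem:score-dilution} applied at token $t$: the hypothesis that $m$ distractor keys satisfy $z_{t,j}\ge z_{t,j^\star}-\Delta$ gives $\alpha_{t,j^\star}\le 1/(1+m e^{-\Delta})$, so instantiating $\varepsilon=1/(1+m e^{-\Delta})$ in the preceding step produces the specialized bound at this explicit rate. I do not foresee any substantive obstacle; the whole argument is a two-line assembly. The one delicate point to flag in the write-up is the monotonicity step, which genuinely needs $A\ge B$: this is automatic in the needle-aligned regime of interest, but in the opposite regime the Proposition~\ref{prop:needle-signal} bound is dominated by $B$ and the stated inequality would need to be written as $\langle u,o_t\rangle \le \max\{\varepsilon A+(1-\varepsilon)B,\,B\}$, with the second argument active when the probe is not needle-preferring.
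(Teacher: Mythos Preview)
Your approach matches the paper's exactly: convert the margin hypothesis to $\alpha_{t,j^\star}\le\varepsilon$ via Definition~\ref{def:tau-retrieval}, substitute into Proposition~\ref{prop:needle-signal}, and invoke Lemma~\ref{lem:score-dilution} for the ``moreover'' clause. Your monotonicity caveat (the substitution step needs $\langle u,v_{j^\star}\rangle\ge\max_{j\neq j^\star}\langle u,v_j\rangle$) is a genuine refinement the paper glosses over---it simply writes ``substitute $\alpha_{t,j^\star}\le\varepsilon$'' without comment, implicitly assuming the needle-aligned regime you describe.
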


Proposition~\ref{prop:needle-signal} shows the fraction of needle signal any generated token can carry is \emph{at most} its own attention mass on the needle. Under dilution (small margin), this mass is provably tiny (Corollary~\ref{cor:needle-signal-eps}), so attending to thinking tokens cannot materially increase the final answer’s effective margin unless some intermediate token first assigns non-trivial attention to the needle.

\begin{takeaway}[Takeaways:]
\textbf{(i)} With fixed weights, worst-case retrieval requires a logit margin that grows like $\Omega(\log T)$; failing to achieve this leads to score dilution and vanishing $\alpha_{i,j^\star}$.
\textbf{(ii)} Autoregressively generating additional tokens with the same static attention does not repair missing access to the evidence.
\textbf{(iii)} Any successful inference-time strategy must change the similarity $q_i^\top k_j$ (e.g., by updating queries) rather than sampling more tokens with unchanged parameters.
\end{takeaway}

\section{Efficient Test-Time Adaptation via Query-Only Updates}
\label{sec:method}

Having established that existing inference-time
scaling strategies on vanilla transformer models
fail for long contexts, we now investigate an
alternate strategy of allocating inference-time compute
via test-time training (TTT).
First, we establish why a standard TTT approach,
involving several forward and backward passes over
the model,
is computationally infeasible for long contexts.
We introduce \method{} (\shortmethod{})
that captures the benefits of TTT while
minimizing the computational overhead
by re-using the KV cache
and only changing the query projections.
We present theoretical (\S\ref{subsec:why-works})
and empirical (\S\ref{sec:results}) evidence
for the efficacy of \shortmethod{}
over vanilla ICL and thinking tokens.

\looseness=-1
\paragraph{Naïve Test-Time Training is Infeasible for Long Contexts.}
A natural first-step is full-parameter TTT:
update FFN and all attention projections ($W_Q,W_K,W_V$)
on the long input $x_{1:T}$.
We find that this is impractical for long-context regimes:
every update alters keys/values across the sequence,
invalidating the KV cache and forcing fresh forward–backward
passes over the \emph{entire} context at each step,
with prohibitive compute and activation memory.

Compute-wise, our FLOP calculations (\autoref{app:flops}) shows that even \emph{one} such full-parameter TTT step over a $T$-token context 
is equivalent to generating about $1.2\times T$ decoding tokens.
That is, for a context of about $T\approx 10^5$ tokens,
this makes a single training step FLOP equivalent
to generating $\sim120$K decoding tokens---rendering
full-parameter TTT untenable.

\looseness=-1
These constraints motivate a cache-preserving alternative. Our approach, \method{} (\shortmethod{}), performs a single prefill to cache $\{K,V\}$ and then adapts \emph{only} the query projections on short spans, keeping the attention evidence pathway fixed while reshaping access to it. This retains the benefits of TTT without repeated full-context passes; we describe and formalize this procedure next.

\subsection{Query-Only TTT for Long Context}
\begin{algorithm}[t]
\caption{Query-Only Test-Time Training for Long Context}
\label{alg:method}
\begin{algorithmic}[1]
\State \textbf{Input:} model $f_\theta$, long context $x_{1:T}$, number of steps $N_{\text{TTT}}$, span length $k$, step size $\eta$
\State $\{K^{(\ell)}, V^{(\ell)}\}_{\ell=1}^L \leftarrow$ \textsc{ForwardPassAndCache}$(f_\theta, x_{1:T})$ \Comment{Single $O(T^2)$ operation}
\For{$n=1$ to $N_{\text{TTT}}$}
  \State Sample a random span $x_s = x_{t:t+k}$ from $x_{1:T}$
  \State Compute $\mathcal{L}_{\text{TTT}}(\theta; x_s)$ using the frozen $\{K^{(\ell)}, V^{(\ell)}\}$
  \State Update only the query parameters: $\{W_Q^{(\ell)}\} \leftarrow \{W_Q^{(\ell)}\} - \eta \, \nabla_{\{W_Q^{(\ell)}\}} \mathcal{L}_{\text{TTT}}$
\EndFor
\State \textbf{return} adapted model $f_{\theta'}$ to generate the final answer
\end{algorithmic}
\end{algorithm}

\begin{wrapfigure}[18]{r}{0.45\textwidth} 
\centering
\vspace{-0.5cm}
\includegraphics[width=\linewidth]{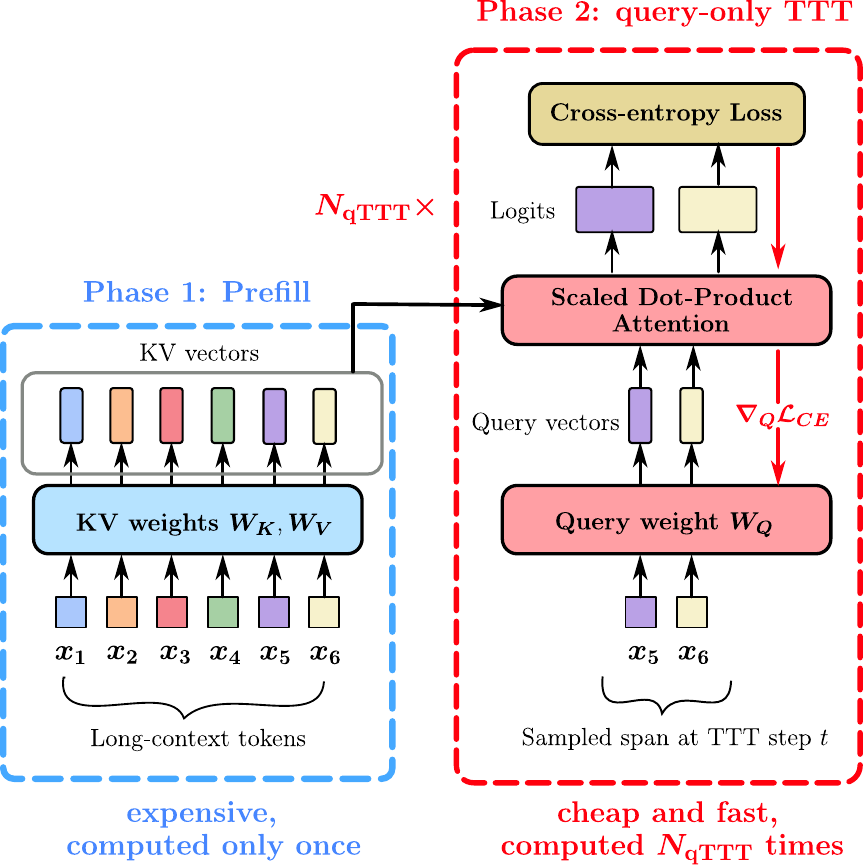}
\captionsetup{font=small}
\caption{\label{fig:qttt} Overview of \method{}.}
\end{wrapfigure}


The core idea of \method{} 
is to avoid repeated, costly forward
and backward passes over the long context.
Instead, we perform a single expensive prefill to cache the context's key and value representations and then execute a series of much cheaper, targeted gradient updates.
The procedure, also outlined in
Algorithm~\ref{alg:method} and Figure~\ref{fig:qttt}, is as follows:

\begin{itemize}[leftmargin=*]
    \item[\textbf{1.}] \textbf{Single-Pass KV Cache Generation.} Given a long context $x_{1:T}$, we perform exactly one full forward pass with the pre-trained model $f_\theta$. During this pass, for each layer $\ell$ in the model, we compute and store the Key and Value projection tensors, $K^{(\ell)} \in \mathbb{R}^{T \times d_k}$ and $V^{(\ell)} \in \mathbb{R}^{T \times d_v}$. These cached tensors represent the complete contextual information and remain frozen for the duration of the adaptation process.

    \item[\textbf{2.}] \textbf{Span-Sampled, Query-Only Objective.} With the KV cache held constant, we perform $N_{\text{TTT}}$ steps of gradient descent. In each step, we update only the query projection matrices $\{W_Q^{(\ell)}\}_{\ell=1}^L$. The objective is the standard next-token prediction loss, computed over a small, randomly sampled contiguous span of tokens $x_s = x_{t:t+k}$, where the span length $k \ll T$: 
    \begin{align}
    \mathcal{L}_{\text{TTT}}(\theta; x_s)
    = - \sum_{i=t}^{t+k-1} \log p_\theta(x_{i+1}\mid x_{1:i}; \{K^{(\ell)}, V^{(\ell)}\}_{\ell=1}^L)
    \end{align}
    Crucially, the gradients $\nabla_\theta \mathcal{L}_{\text{TTT}}$ are computed and applied only with respect to the parameters $\{W_Q^{(\ell)}\}$, leaving all other model weights, including the now-static KV cache, unchanged.
\end{itemize}

\subsection{Why Query-Only Test-Time Training is Effective}
\label{subsec:why-works}

\begin{figure}[t]
    \centering
    \includegraphics[width=0.8\linewidth]{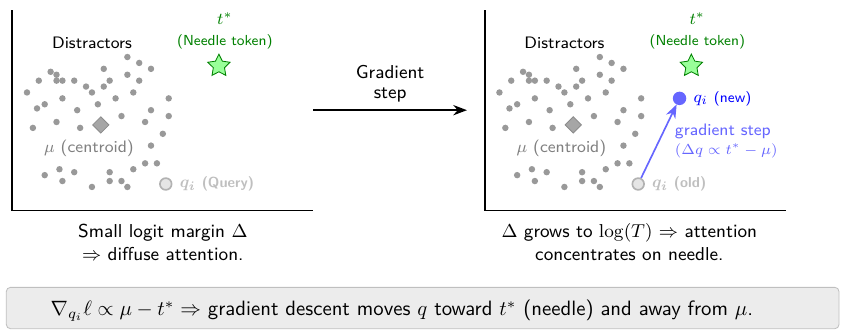}
    \caption{A visual representation of Proposition~\ref{claim:query-gradient} showing
    how \shortmethod{} improves the logit margin. The gradient updates via \shortmethod{} directly move the query projection weights towards the target needles and counteracts score dilution.
    }
    \label{fig:qttt_margin}
\end{figure}

\looseness=-1
\Cref{sec:failures} showed that long-context failures
arise from score dilution and the resulting need for a
growing target–distractor \emph{margin}.
Query-only TTT targets this bottleneck directly:
only adapt the query projections 
while holding keys/values fixed (from a single prefill).
This leaves the evidence (K,V)
unchanged and instead reshapes \emph{query} to it by 
modifying the similarity $q_i^\top k_j$ for a given input (Proposition~\ref{claim:query-gradient}; \Cref{fig:qttt_margin}).

\looseness=-1
\begin{proposition}[Query update]
\label{claim:query-gradient}
For loss $\ell_i=-\log\alpha_{i,j^\star}$ with fixed $K$, the gradient w.r.t.\ $q_i$ is
\[
\nabla_{q_i}\ell_i=\frac{1}{\sqrt{d_k}}\Big(\underbrace{\sum_{\ell=1}^T \alpha_{i,\ell}k_\ell}_{\mu_i}-k_{j^\star}\Big).
\]
A descent step $q_i\leftarrow q_i-\eta\nabla_{q_i}\ell_i$ moves $q_i$ toward $k_{j^\star}$ and away from the attention-weighted mean $\mu_i$, explicitly counteracting dilution. (The statement holds per head and aggregates across heads.)
\end{proposition}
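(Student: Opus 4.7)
The plan is to proceed by direct differentiation of the log-softmax loss. Writing $\ell_i = -z_{i,j^\star} + \log\sum_{\ell=1}^T \exp(z_{i,\ell})$ with $z_{i,j} = q_i^\top k_j/\sqrt{d_k}$ and keys held fixed, both terms depend on $q_i$ only through the logits $z_{i,\cdot}$, so the computation reduces to two standard pieces: the gradient of a bilinear form and the gradient of a log-sum-exp.

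Concretely, I would first differentiate the linear term: since $z_{i,j^\star} = q_i^\top k_{j^\star}/\sqrt{d_k}$ is bilinear in $q_i$ and $k_{j^\star}$ is treated as a constant, $\nabla_{q_i}(-z_{i,j^\star}) = -k_{j^\star}/\sqrt{d_k}$. For the log-sum-exp, the chain rule yields $\nabla_{q_i}\log\sum_{\ell} \exp(z_{i,\ell}) = \sum_{\ell} \frac{\exp(z_{i,\ell})}{\sum_{m}\exp(z_{i,m})}\, \nabla_{q_i} z_{i,\ell} = \frac{1}{\sqrt{d_k}}\sum_{\ell}\alpha_{i,\ell}\,k_\ell$, which is exactly $\mu_i/\sqrt{d_k}$ by definition of $\mu_i$. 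Adding the two contributions gives the claimed $\nabla_{q_i}\ell_i = (\mu_i - k_{j^\star})/\sqrt{d_k}$.

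The geometric half of the statement is then an immediate reading of the descent step $q_i \leftarrow q_i + (\eta/\sqrt{d_k})(k_{j^\star} - \mu_i)$: the update direction decomposes additively into a pull toward the needle key $k_{j^\star}$ and a push away from the attention-weighted mean $\mu_i$ of all keys. Composed with Lemma~\ref{lem:score-dilution}, this explains how even a small query-only step can increase the target–distractor similarity gap without touching the KV cache. For the multi-head extension, each head's query matrix $W_Q^{(h)}$ contributes an independent gradient summand of the same form, since the loss factorizes additively through the concatenated attention outputs and keys/values are not coupled across heads during this differentiation.

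There is no real technical obstacle here; the proposition is essentially a one-line softmax-gradient identity. The only care needed is notational: carrying the $1/\sqrt{d_k}$ factor consistently, treating $k_\ell$ as constants throughout (encoding the fixed-$K$ hypothesis via the cached prefill), and summing over \emph{all} keys in the log-sum-exp gradient—forgetting that $z_{i,\ell}$ depends on $q_i$ for every $\ell$ (not only $\ell = j^\star$) is the single slip that would derail the derivation.
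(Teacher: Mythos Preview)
Your proposal is correct and mirrors the paper's own proof essentially line for line: write $\ell_i=-z_{i,j^\star}+\log\sum_\ell e^{z_{i,\ell}}$, differentiate the linear term and the log-sum-exp via the chain rule using $\partial z_{i,\ell}/\partial q_i=k_\ell/\sqrt{d_k}$, and read off the descent direction. The only superfluous bit is the claim that the loss ``factorizes additively through the concatenated attention outputs'' in the multi-head case---the paper simply notes the statement holds per head and aggregates by linearity of the output projection, without invoking any additive factorization of the loss.
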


\begin{lemma}[Margin improvement]
\label{lemma:margin-improvement}
Let $M_i(q_i)\coloneqq -\ell_i(q_i)$ denote the logit margin. For sufficiently small $\eta>0$,
\[
M_i\big(q_i-\eta\nabla_{q_i}\ell_i\big)
= M_i(q_i) + \eta\|\nabla_{q_i}\ell_i\|_2^2 + O(\eta^2).
\]
Hence the margin strictly increases whenever $\nabla_{q_i}\ell_i\neq 0$, with the gain proportional to $\|k_{j^\star}-\mu_i\|_2^2$. Improvements are therefore largest precisely when attention is most diffuse, i.e., in the long-context regimes where score dilution is severe.
\end{lemma}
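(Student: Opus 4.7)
The plan is a first-order Taylor expansion of $\ell_i$ along the descent direction, followed by substitution of the gradient formula from Proposition~\ref{claim:query-gradient} to connect the improvement to the score-dilution picture. Since by definition $M_i(q_i) = -\ell_i(q_i)$, I would start by writing
$$M_i\!\big(q_i - \eta\,\nabla_{q_i}\ell_i\big) \;=\; -\ell_i\!\big(q_i - \eta\,\nabla_{q_i}\ell_i\big),$$
and Taylor-expanding $\ell_i$ in $q_i$. The first-order term is $\langle \nabla_{q_i}\ell_i,\, -\eta\,\nabla_{q_i}\ell_i\rangle = -\eta\,\|\nabla_{q_i}\ell_i\|_2^2$, so after the sign flip from $M_i = -\ell_i$ the margin picks up exactly the advertised $+\eta\,\|\nabla_{q_i}\ell_i\|_2^2$ leading correction.

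For the remainder, I would note that $\ell_i$ is a composition of a log-softmax with a linear map in $q_i$ (with $K$ frozen after the prefill), hence $C^\infty$ in $q_i$, and its Hessian has operator norm bounded on any compact neighborhood of $q_i$ (the log-softmax Hessian is controlled by the attention distribution, which lies in the simplex, and the keys have fixed finite norm). Lagrange's form of the remainder then yields a second-order term of size at most $\tfrac{1}{2}L\,\eta^2\,\|\nabla_{q_i}\ell_i\|_2^2$, which is exactly the $O(\eta^2)$ in the statement. Strict increase of $M_i$ for small enough $\eta>0$ whenever $\nabla_{q_i}\ell_i\neq 0$ follows immediately from the positive quadratic-in-$\eta$ leading correction dominating the $O(\eta^2)$ remainder.

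To obtain the interpretive half of the lemma---that the gain is proportional to $\|k_{j^\star}-\mu_i\|_2^2$ and is largest in the dilute regime---I would substitute the gradient expression from Proposition~\ref{claim:query-gradient}, namely $\nabla_{q_i}\ell_i = (\mu_i - k_{j^\star})/\sqrt{d_k}$, to get $\|\nabla_{q_i}\ell_i\|_2^2 = \|k_{j^\star}-\mu_i\|_2^2/d_k$. The qualitative claim then follows by observing that $\mu_i = \sum_{\ell=1}^T \alpha_{i,\ell}\,k_\ell$ collapses to $k_{j^\star}$ when attention is already concentrated on the needle (small gradient, little to gain), and becomes a broad average pulled away from $k_{j^\star}$ exactly when $\alpha_{i,\cdot}$ is spread over many distinct keys---precisely the score-dilution regime characterized by Lemma~\ref{lem:score-dilution}. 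The multi-head version is the same argument applied per head, with the aggregate margin gain given by summing the per-head $\|k_{j^\star}-\mu_i\|_2^2/d_k$ contributions.

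The only technical subtlety is controlling the hidden constant in the $O(\eta^2)$ term; since the lemma asserts only a pointwise expansion at a single $q_i$ (not uniform over inputs or iterates), this reduces to a standard pointwise smoothness estimate for log-softmax composed with a linear map and requires no additional machinery beyond the boundedness of the softmax Jacobian. The main obstacle is thus conceptual rather than technical: one must be careful to state the result as a one-step local improvement guarantee, since without a descent-lemma-style step-size rule (e.g.\ $\eta < 2/L$) repeated iterations cannot be chained into a global convergence claim from this local expansion alone.
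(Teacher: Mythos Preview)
Your proposal is correct and follows essentially the same approach as the paper: a first-order Taylor expansion of $M_i=-\ell_i$ along the gradient-descent step, followed by substitution of the explicit gradient $\nabla_{q_i}\ell_i=(\mu_i-k_{j^\star})/\sqrt{d_k}$ from Proposition~\ref{claim:query-gradient} to read off the $\|k_{j^\star}-\mu_i\|_2^2/d_k$ gain and its interpretation. Your treatment is in fact more careful than the paper's, which simply asserts the first-order expansion and then remarks that under an $L$-Lipschitz gradient one may take $\eta\in(0,1/L]$ to obtain $M_i(q_i^+)\ge M_i(q_i)+\tfrac{\eta}{2}\|\nabla\ell_i(q_i)\|_2^2$; your explicit justification of the $O(\eta^2)$ remainder via boundedness of the log-softmax Hessian fills in exactly what the paper leaves implicit.
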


\looseness=-1
\begin{takeaway}[Takeaway:]{Query-only TTT reallocates inference-time compute into \emph{margin-raising} updates: with fixed $\{K,V\}$ from a single prefill, each step moves $q_i$ toward $k_{j^\star}$ and \emph{provably} increases the target–distractor logit margin. It thus directly mitigates score dilution, most when attention is most diffuse, without re-encoding the context or growing the KV cache.}\end{takeaway}

\subsection{FLOP Equivalence: Thinking Tokens vs.\ Query-Only TTT}
\label{subsec:flops-summary}
We compare two ways to spend inference-time compute after a single prefill: (i) generate $T_{\text{think}}$ \emph{thinking} tokens with frozen weights, or (ii) run $N_{\text{\shortmethod{}}}$ \emph{query-only} updates on spans of length $k\!\ll\!T$ while reusing the KV cache. For long $T$, FLOP equivalence (\autoref{app:flops}) yields the rule of thumb
\begin{equation}
\label{eq:think-vs-qttt}
T_{\text{think}} \;\approx\; 2\,N_{\text{\shortmethod{}}}\,k
\qquad\text{(long $T$, span $k\!\ll\!T$).}
\end{equation}

Consider a dense model of about $8$B parameters on a long context $T=10^5$
and an inference-time budget budget to decode
$8$K thinking tokens after the prefill.
From~\cref{eq:think-vs-qttt}, the FLOPs equate to about
$N_{\text{\shortmethod{}}}\!=\!16$ \method{} steps on spans of $k\!=\!128$,
and
$N_{\text{\shortmethod{}}}\!=\!8$
for $k\!=\!512$.
In both cases, thinking tokens grow the KV cache
by thousands of positions without changing attention,
whereas query-only TTT keeps the cache length fixed at $T$
and uses the matched FLOPs to \emph{reshape queries}
against the existing keys/values, directly
targeting the margin bottleneck from \S\ref{sec:failures}.

\section{Experimental Results}
\label{sec:results}

In this section,
we discuss experimental results
across a suit of long-context tasks.
Firstly, we callback the synthetic long-context
setup from~\S\ref{sec:empirical-limits}.
\autoref{fig:figure_1} shows
that spending inference-time compute via
\method{} results in significant performance
improvements on top of just in-context decoding.
We observe that the improvements are consistent
across context lengths unlike thinking tokens
that show rapid diminishing returns.
In the rest of this section,
we discuss our findings on 
long-context benchmarks that involve 
nuanced $n$-hop retrieval, reasoning, and comprehension.

Further, we empirically verify that
these improvements with \shortmethod{} are indeed
a result of margin improvement and reducing score dilution.
Appendix~\ref{app:score-dilution} (\Cref{tab:bank-rope-ablation})
shows an analysis of attention mass on the target tokens
with and without \shortmethod{}.
Particularly, we aggregate the attention scores
for the target tokens
(well defined for these synthetic tasks)
across model layers to study the influence of
\shortmethod{} against vanilla attention.
We observe that as number of input tokens increases
(hence the number of distractors),
the performance as well as attention mass
for vanilla attention
goes down drastically.
However, \shortmethod{} helps preserve attention mass
significantly across context lengths.

\begin{figure*}[t]
  \centering
  \begin{subfigure}[t]{0.49\textwidth}
    \centering
    \includegraphics[width=\linewidth]{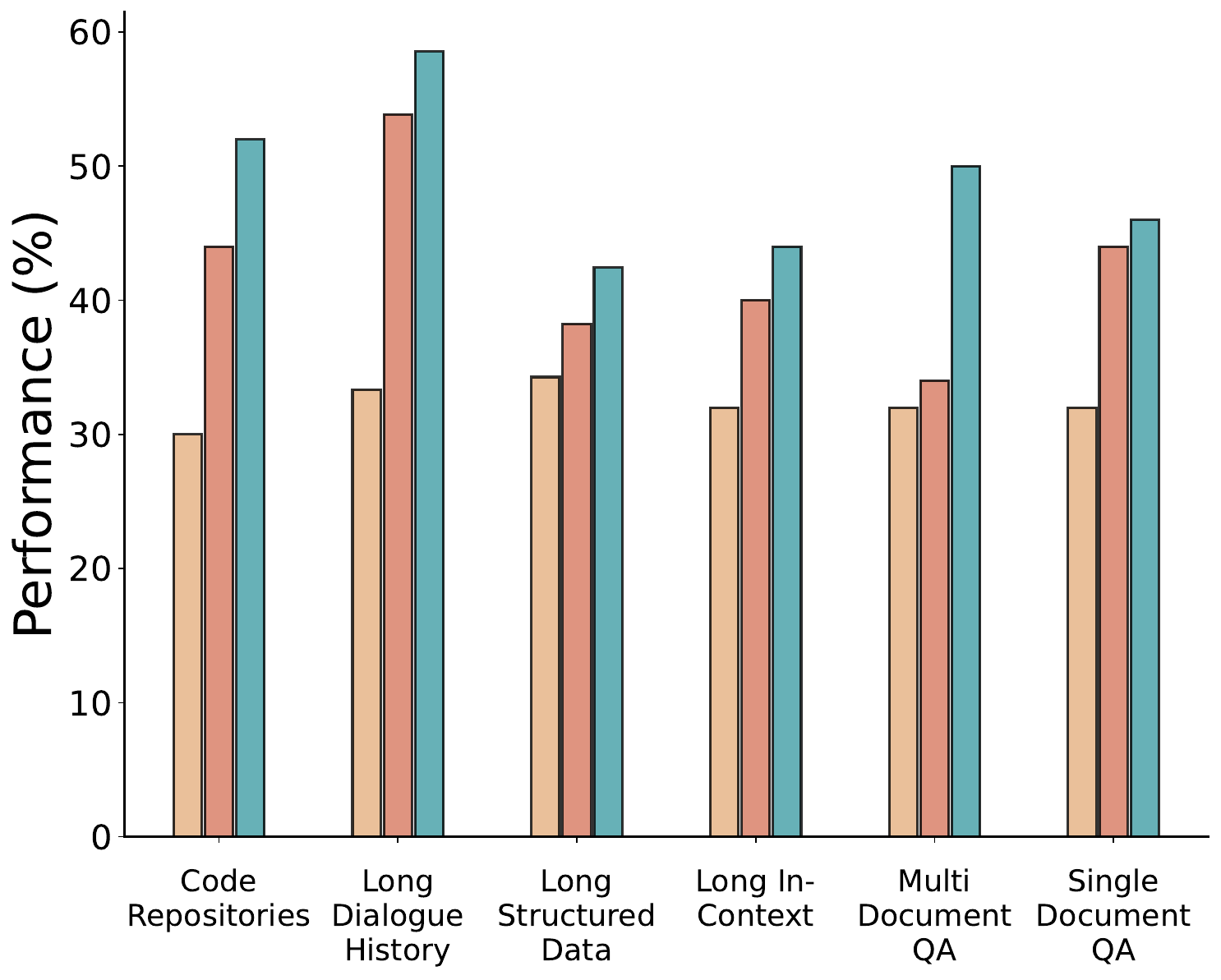}
    \small
    \mbox{\hspace*{1.5cm}\cblock{229}{177}{129} \hspace{0.5mm}In-Context Only\hspace{3mm} \cblock{215}{122}{97} \hspace{0.5mm}With Thinking\hspace{3mm} \cblock{66}{158}{166}\hspace{1mm}With Query-only Test-Time Training (qTTT)}
    \subcaption{
    \label{fig:longbench-v2-8b-bars}
    Comparison on LongBench-v2 subsets
    for Qwen3-8B.
    Using \shortmethod{} consistently
    outperforms standard in-context
    and FLOP-matched thinking settings.}
  \hfill
  \end{subfigure}
  \begin{subfigure}[t]{0.49\textwidth}
    \centering
    \includegraphics[width=\linewidth]{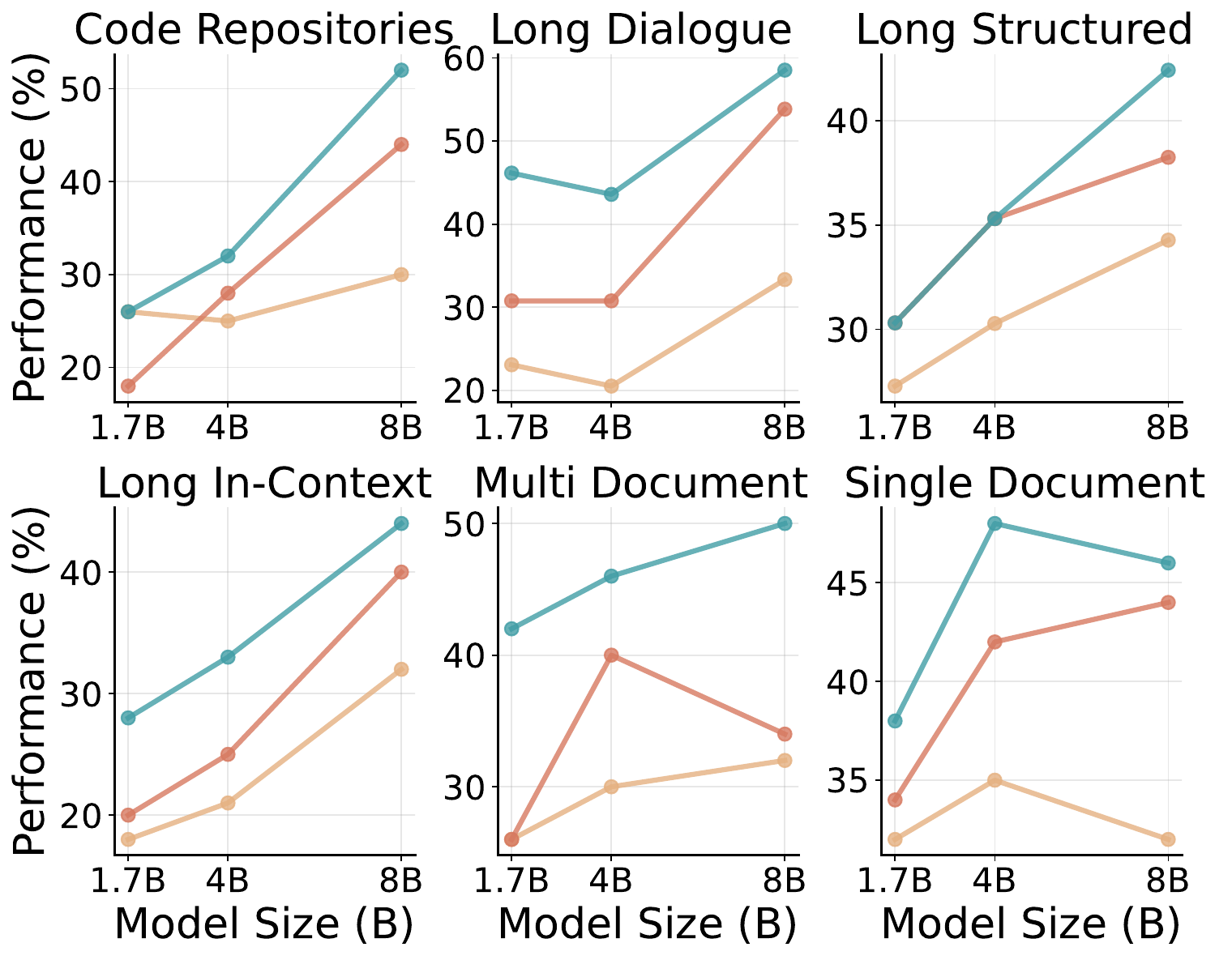}
    \vspace{.04cm}
    \subcaption{
    \label{fig:longbench-v2-size-lines}
    Variation of performance across model size
    on LongBench-v2 subsets.
    \shortmethod{} improves performance consistently
    across model sizes.}
  \end{subfigure}
  \captionsetup{skip=1pt, belowskip=1pt}
  \caption{\label{fig:longbench-v2-main}
  LongBench-v2~\citep{bai2023longbench}
  provides a testbed to evaluate long-context
  abilities across a diverse set of context types.
  Here, we report evaluations across all six subsets
  of the benchmark for Qwen3-\{$1.7$/$4$/$8$B\} models.
  \shortmethod{} shows consistent improvements 
  over both standard in-context learning and
  FLOP-matched thinking tokens across
  the different context types.
}
\end{figure*}

\begin{figure*}[t]
  \centering
  \begin{subfigure}[t]{0.49\textwidth}
    \centering
    \includegraphics[width=\linewidth]{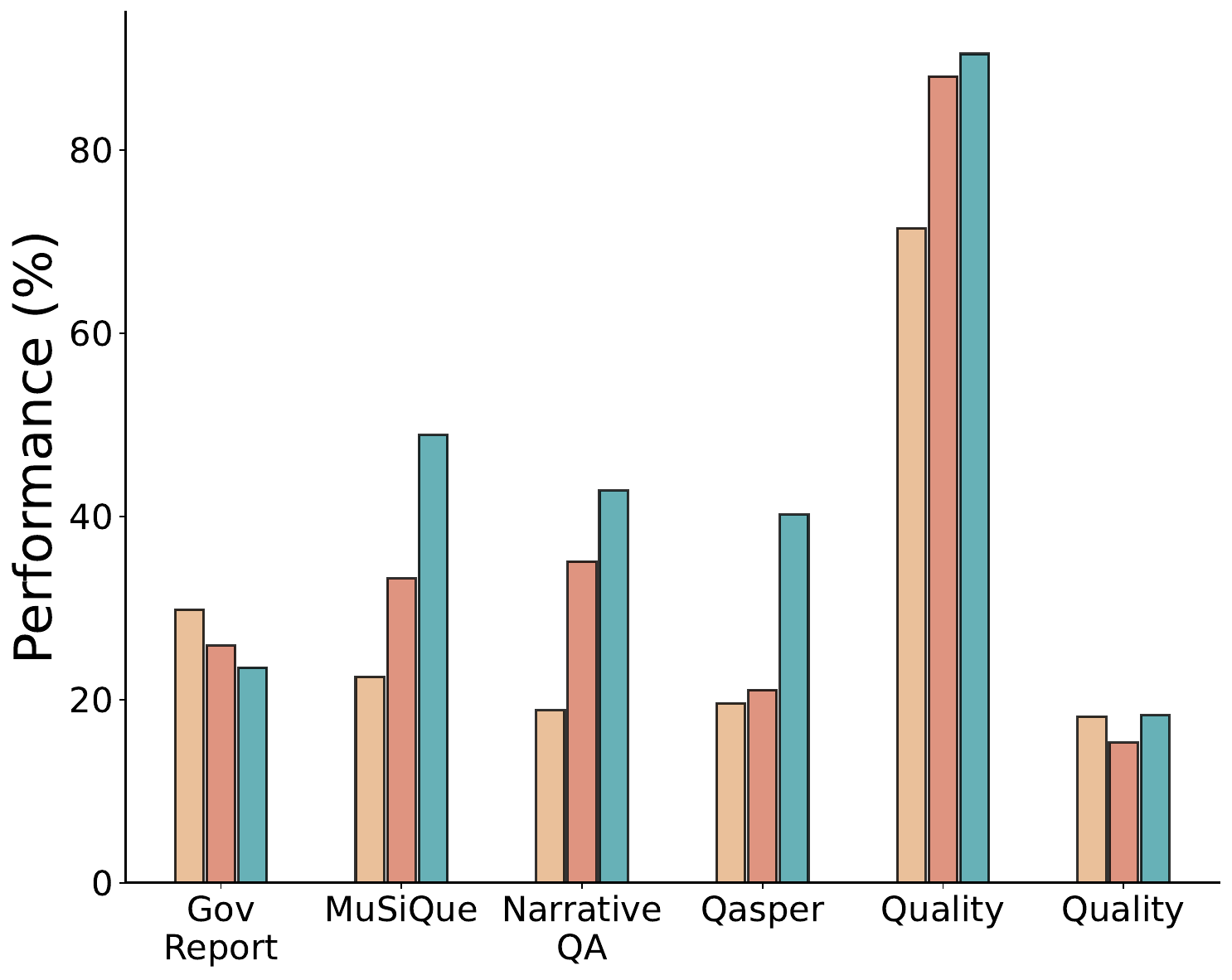}
    \small
    \mbox{\hspace*{1.5cm}\cblock{229}{177}{129} \hspace{0.5mm}In-Context Only\hspace{3mm} \cblock{215}{122}{97} \hspace{0.5mm}With Thinking\hspace{3mm} \cblock{66}{158}{166}\hspace{1mm}With Query-only Test-Time Training (qTTT)}
    \subcaption{
    \label{fig:zeroscrolls-8b-bars}
    Comparison on ZeroScrolls subsets
    for Qwen3-8B.
    Using \shortmethod{} consistently
    outperforms standard in-context
    and FLOP-matched thinking settings.}
  \end{subfigure}
  \hfill
  \begin{subfigure}[t]{0.49\textwidth}
    \centering
    \includegraphics[width=\linewidth]{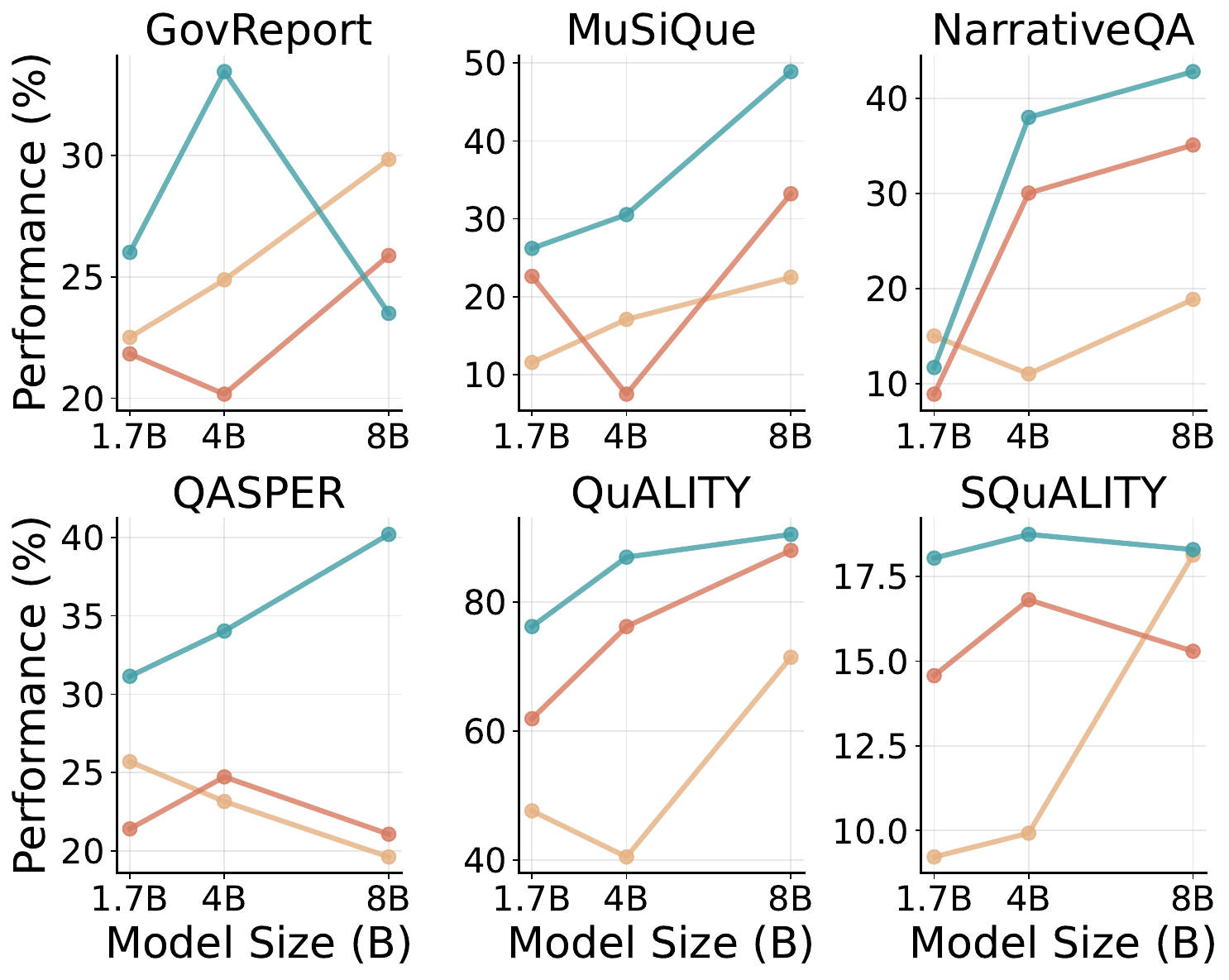}
    \vspace{.04cm}
    \subcaption{
    \label{fig:zeroscrolls-size-lines}
    Variation of performance across model size
    on ZeroScrolls subsets.
    \shortmethod{} improves performance consistently
    across sizes, often greater for larger models.}
  \end{subfigure}
  \caption{\label{fig:zeroscrolls-main}
  ZeroScrolls~\citep{shaham2023zeroscrolls}
  evaluates a diverse set of tasks
  and model abilities over long context inputs.
  We report evaluations across six subsets
  for Qwen3-\{$1.7$/$4$/$8$B\} models.
  \shortmethod{} shows consistent improvements 
  over both standard in-context learning and
  FLOP-matched thinking tokens,
  especially for retrieval-based multi-hop reasoning
  and long form comprehension tasks.}
  \captionsetup{skip=1pt, belowskip=1pt}
\end{figure*}

\para{Setup and Evaluation Protocol.}
We evaluate \method{} (\shortmethod{}) on long-context tasks against two baselines: (i) \emph{In-context}—standard decoding with no intermediate tokens; and (ii) \emph{Thinking}—a chain-of-thought variant whose extra tokens are \emph{compute-matched} to \shortmethod{} via the FLOP equivalence in \S\ref{subsec:flops-summary}.
Our experiments are performed over
Qwen3 models across 1.7B, 4B, and 8B parameters,
and cover all subsets of \textbf{LongBench-v2}~\citep{bai2023longbench} (six categories) and \textbf{ZeroSrolls}~\citep{shaham2023zeroscrolls} (eight datasets).
%
Unless stated otherwise, we use $T_{\text{think}}{=}8192$, $k{=}128$, $N_{\text{\shortmethod{}}}{=}32$,
and a common budget of $512$ tokens to generate
the final answer\footnote{We use the /think and /no\_think tokens in the Qwen3 model to control for this. We elaborate on further details
including decoding parameters and
prompt templates in \autoref{app:experimental-details}.}.

\para{LongBench-v2.}
LongBench-v2~\citep{bai2023longbench}
evaluates long-context reasoning across 
diverse context types.
The benchmark probes 
whether models can locate and use dispersed evidence
to answer multiple-choice questions
across a variety of context types:
given multi-file project trees in the
\textit{Code Repositories} setting,
to resolve arguments of a particular function; and
given the context as a set of related documents
in the \emph{Multi-Document QA} setting,
synthesize spans across sources to answer a question.
This allows us to assess the applicability of \shortmethod{}
across forms of input contexts.

\autoref{fig:longbench-v2-main} shows that, under compute-matched budgets, \shortmethod{} delivers consistent and often substantial gains across model sizes. On \emph{Long Dialogue History} and \emph{Multi-Document QA}, where evidence is most diffuse, \shortmethod{} outperforms standard in-context and thinking by wide margins (e.g., for Qwen3-4B: 30.8 $\rightarrow$ \textbf{43.6} on \emph{Long Dialogue History}; 40.0 $\rightarrow$ \textbf{46.0} on \emph{Multi-Document QA}). In \emph{Code Repositories}, \shortmethod{} scales especially well with model size (for Qwen3-8B: 30.0 $\rightarrow$ 44.0 $\rightarrow$ \textbf{52.0}).
Overall, the LongBench-v2 results indicate that \shortmethod{} fares well across markedly different context types.

\para{ZeroScrolls.}
ZeroScrolls \citep{shaham2023zeroscrolls} evaluates long-context reasoning across diverse tasks. We group the datasets into three categories: (i) \emph{Multi-hop reasoning} (\texttt{MuSiQue}, \texttt{QASPER}, \texttt{NarrativeQA}), which require locating and composing evidence spread across long documents;
(ii) \emph{Long-form summarization} (\texttt{GovReport}, \texttt{QMSum}, \texttt{SQuALITY}), which emphasize distilling lengthy inputs;
and (iii) \emph{Long-passage comprehension} (\texttt{QAuLITY}), which measures multiple-choice accuracy over extended contexts.
In contrast to LongBench-v2, this suite of tests
evaluates the ability to utilize some long context
to solve a variety of different tasks.

\autoref{fig:zeroscrolls-overview} shows that
\shortmethod{} consistently outperforms
vanilla thinking
on multi-hop QA and comprehension tasks,
with gains that strengthen with model size.
On summarization-style datasets,
improvements are smaller
and comparable to thinking, suggesting that
when generation quality, not retrieval, is the
primary bottleneck, reweighting attention yields limited returns. 
Overall,
we see significant performance gains
across datasets and model scales.

The full set of results on LongBench-v2 and ZeroScrolls
are elaborated in Appendix~\ref{app:all_results}.
Moreover, we include additional test-time compute baselines
such as best-of-N and beam search
in Appendix~\ref{app:tts-baselines}.
We also perform a comprehensive latency and wall-clock
time comparison of \shortmethod{} with other approaches
in Appendix~\ref{app:latency}.

\begin{takeaway}[Takeaways:]{\textbf{(i)} We see consistent gains in performance across benchmarks and model sizes, \shortmethod{} yields the best average performance under matched FLOPs (\autoref{fig:longbenchv2-overview}, \autoref{fig:zeroscrolls-overview}).
\textbf{(ii) }Retrieval-driven tasks benefit the most, validating the score dilution diagnosis and the margin increase with \shortmethod{} (\S\ref{sec:failures}, \S\ref{subsec:why-works}). \textbf{(iii) }Thinking tokens are not a reliable substitute: they sometimes help but can also trail \emph{In-context}, especially in very long contexts. \textbf{(iv)} \shortmethod{} is a more effective use of inference-time compute; without changing architecture, data, or pre-training.}
\end{takeaway}

\section{Prior Work}
\label{sec:prior}

\looseness=-1
\para{Long-Context LLMs.}
Context windows have expanded rapidly, with models reaching million-token scale~\citep{reid2024gemini},
usually extending limits via RoPE scaling~\citep{chen2023extending,bai2023qwen}.
Parallel efforts reduce quadratic attention with sparse/structured patterns \citep{beltagy2020longformer,zaheer2020bigbird}.
Evaluation has coalesced around long-context suites such as LongBench/LongBench-v2 \citep{bai2023longbench}, ZeroScrolls~\citep{shaham2023zeroscrolls}, RULER, and domain-specific code benchmarks like SWE-bench variants \citep{jimenez2024swebench}.
However, these LLMs still exhibit
strong position sensitivity,
yielding the ``lost in the middle" effect \citep{liu2023lost}.
Needle-in-a-haystack–style tests show
that a single relevant span can be overwhelmed
by many distractors,
and this persists across languages and document structures \citep{kamradt2024needle}.
Our work targets this retrieval
failure by addressing how attention mass is allocated over very long inputs.

\para{Inference-Time Compute Scaling.}
A common approach is to spend more compute at inference via chain-of-thought~\citep{wei_chain--thought_2023}, self-consistency \citep{wang_self-consistency_2023}, best-of-$n$ \citep{nakano2021webgpt},
or other strategies~\citep{zelikman2024quiet, zweiger_self-adapting_2025, kang_scalable_2025}. While often helpful, these methods scale decoding and can be compute-heavy with diminishing returns \citep{snell_scaling_2024,liu_can_2025}.
%
Another way to spend inference-compute is via
test-time training~\citep{sun_test-time_2020, hardt_test-time_2024,akyurek_surprising_2025}.
While typically done to handle distribution shifts,
recent work has started focusing on long-context
LLM use cases~\citep{sun_learning_2025, zuo_ttrl_2025}.
To our knowledge, our work is first to re-purpose
TTT to micro-distribution of individual inputs via
a query-only variant tailored to long-context.
\section{Discussion}
\label{sec:conclusion}
We identify score dilution in static quadratic attention as a core cause of long-context failures.
We design synthetic tasks to study long-context behavior controllably and show that accuracy falls sharply with context length $T$ and “thinking’’ tokens show diminishing returns (\S\ref{sec:failures}).
We proposed \method{} (\shortmethod{}) to reallocate inference-time budget via few query-only updates that provably increase the target–distractor margin (\S\ref{sec:method}).
Under matched FLOPs, \shortmethod{} consistently outperforms \emph{in-context} and \emph{thinking} on LongBench-v2 and ZeroSCROLLS,
with the largest gains on retrieval and multi-hop reasoning (\S\ref{sec:results}). In short, adapting queries is a more effective use of inference-time compute than generating more tokens for long context tasks.

\para{Future directions.}
(1) We evaluate a single point on the $(k, N_{\text{TTT}})$ trade-off; exploring budget schedules across span size and steps is immediate. 
(2) Our compute-matched baseline focuses on “thinking’’ tokens; extending to self-consistency and best-of-$n$ within the same framework is future work. 
(3) Gains are task-dependent; developing simple predictors for when to prefer \shortmethod{} (vs.\ decoding-based scaling) is a practical next step.

\section{Acknowledgments}

This work was done when RB, RT, SSD, and DK
were summer interns at Meta.
RB would like to thank other interns
in the legacy GenAI team for
the exchange of ideas and brainstorming
that shaped this project.
Namely: Irene Zhang, Winnie Yang, Julian Coda-Forno,
Sriyash Poddar, Arushi Rai,
and others in the Research Club.
We thank Sharan Narang, 
Prateek Yadav, and Mike Lewis
for their guidance.
RB would like to thank Yonatan Belinkov,
Nihal Nayak, Lyndon Lam, Sunny Qin, Bingbin Liu, and other members
of the ML Foundations group and
the Kempner Institute at Harvard for their feedback
on the manuscript.

\clearpage
\newpage
\bibliographystyle{assets/plainnat}
\bibliography{paper}

\clearpage
\newpage
\beginappendix
\section{Synthetic Tasks}
\label{sec:dataset-examples}

\begin{figure}[!t]
   \begin{bluebox}
    \raggedright
    
        \textbf{Bug Description:} The attention mechanism fails to properly normalize attention scores, leading to numerical instability and gradient explosion during training. The attention weights grow unbounded, causing immediate training divergence.
        
        \vspace{0.7em}
        \textbf{Code context:}
        \begin{alltt}\ttfamily\footnotesize
olmo/model.py
L335: def _scaled_dot_product_attention(
L336:\hspace{1cm}self,
L337:\hspace{1cm}q: torch.Tensor,
L338:\hspace{1cm}k: torch.Tensor,
L339:\hspace{1cm}v: torch.Tensor,
L340:\hspace{1cm}attn_mask: Optional[torch.Tensor] = None,
L341:\hspace{1cm}dropout_p: float = 0.0,
L342:\hspace{1cm}is_causal: bool = False,
L343:\hspace{1cm}) -> torch.Tensor:
L344:
L345: \hspace{.8cm}attn_weights = torch.matmul(q, k.transpose(-2, -1))
L346:
L347: \hspace{.8cm}if is_causal:
L348: \hspace{1.4cm}assert attn_mask is None
L349: \hspace{1.4cm}query_len, key_len = q.shape[-2], k.shape[-2] 
        \end{alltt}
        
        \vspace{0.3em}
        
        \textbf{Target:} Given the above code context, please identify the exact location of the bug.
        
        
        
        \vspace{0.0em}
        {\color{blue!60}\hdashrule{\linewidth}{0.7pt}{3mm 1.2mm}}
        \vspace{-0.2cm}
        
        \textbf{Model output:} $\texttt{olmo/model.py:L345}$
        \end{bluebox}
    \caption{\label{fig:code_example}
    An example of the code bug localization synthetic task.}
\end{figure}

\begin{figure}[!t]
    \begin{redbox}
    \raggedright
    
        \textbf{Task Description:} Analyze this banking transaction log for bugs.

        \vspace{0.7em}
        
        \textbf{Initial state:} {\small $\texttt{\{"account\_A": 4000, "account\_B": 4200, "total": 8200\}}$}

        \vspace{0.7em}

        \textbf{Rules:} {1. Total money must remain constant (conservation)}

        \hspace{1.05cm}{2.\; No account can go negative}

        \hspace{1.05cm}{3.\; All calculations must be mathematically correct}
        
        \vspace{0.7em}
        \textbf{Transaction logs:}
        \begin{alltt}\ttfamily\footnotesize
[TX001]: Transfer \$107: A=4000 → 3893, B=4200 → 4307
[TX002]: Transfer \$204: A=3893 → 3689, B=4307 → 4511 
[TX003]: Transfer \$780: A=3689 → 2909, B=4511 → 5291
[TX004]: Transfer \$2925:A=2909 → -16,  B=5291 → 8216 
\mbox{[TX005]: Transfer \$699: B=8216 → 7517, A=-16  →  683} 
        \end{alltt}
        
        \vspace{-0.3cm}

        \textbf{Possible bug types} (choose exactly one):\\
            \hspace{.2cm}-- \texttt{CALC\_ERROR}: Mathematical calculation is incorrect\\
            \hspace{.2cm}-- \texttt{NEGATIVE\_BAL}: Account balance becomes negative\\
            \hspace{.2cm}-- \texttt{LOST\_UPDATE}:  Concurrent update causes lost transaction\\
            \hspace{.2cm}-- \texttt{DUPLICATE\_TXN}:  Same transaction processed multiple times

        \vspace{0.2cm}
        
        \textbf{Target}: Please identify the bug type and location.
        
        
        \vspace{0.0em}
        {\color{red!60}\hdashrule{\linewidth}{0.7pt}{3mm 1.2mm}}
        \vspace{-0.2cm}
        
        \textbf{Model output:} {\small $\texttt{\{"bug\_type": NEGATIVE\_BAL, "bug\_location": TX004\}}$} 
        \end{redbox}
    \caption{\label{fig:logs_example}
    An example of the log transactions synthetic task.
    }
\end{figure}

We illustrate two representative synthetic tasks used in our study. 
Figure~\ref{fig:code_example} shows the \emph{code bug localization} task: the model receives a brief natural-language bug description together with a minimal, line-numbered code context and must return the exact file-and-line of the offending statement. In the example, the model correctly identifies the line where attention scores are computed without proper normalization (\texttt{olmo/model.py:L345}).

Figure~\ref{fig:logs_example} shows the \emph{transaction-log consistency} task: given an initial account state, a set of invariants (e.g., conservation of total funds, no negative balances), and a short sequence of transfers, the model must select a single bug type and pinpoint the first offending transaction. In the example, the model outputs \texttt{NEGATIVE\_BAL} at \texttt{TX004}, where the balance of account A becomes negative, violating the stated rules.

Together, these examples illustrate the input/output format of our synthetic tasks, the kind of structured context provided to the model, and the expected concise targets (a specific line for code or a \{\texttt{bug\_type}, \texttt{TX\_id}\} pair for logs). We use similarly formatted instances throughout our evaluation.

\section{Proofs for Section~\ref{sec:failures}}
\label{app:proofs-long-context}

\para{Notation.}
For a fixed query $q_i$, logits are $z_{i,j}=\frac{q_i^\top k_j}{\sqrt{d_k}}$, attention weights $\alpha_{i,j}=\frac{e^{z_{i,j}}}{\sum_{\ell}e^{z_{i,\ell}}}$, and $o_i=\sum_j \alpha_{i,j}v_j$. We write $\mu_i=\sum_{\ell}\alpha_{i,\ell}k_\ell$.

\begin{proof}[Proof of Lemma~\ref{lem:log-margin} (Score dilution)]
Let $S=\{j\neq j^\star:\, z_{i,j}\ge z_{i,j^\star}-\Delta\}$ with $|S|=m$. Then
\[
\sum_{\ell=1}^T e^{z_{i,\ell}}
\;\ge\; e^{z_{i,j^\star}}+\sum_{j\in S}e^{z_{i,j}}
\;\ge\; e^{z_{i,j^\star}}\big(1+m e^{-\Delta}\big),
\]
hence $\alpha_{i,j^\star}=\frac{e^{z_{i,j^\star}}}{\sum_\ell e^{z_{i,\ell}}}\le\frac{1}{1+m e^{-\Delta}}$. 
If $m\ge cT$ with $c>0$ and $\Delta=O(1)$, then $\alpha_{i,j^\star}\to 0$ as $T\to\infty$.
\end{proof}

\begin{proof}[Proof of Lemma~\ref{lem:log-margin} (Logarithmic margin requirement)]
Let $\gamma=\min_{j\ne j^\star}(z_{i,j^\star}-z_{i,j})$. Then 
$\sum_{j\ne j^\star}e^{z_{i,j}}\le (T-1)e^{z_{i,j^\star}-\gamma}$, so
\[
\alpha_{i,j^\star}
=\frac{1}{1+\sum_{j\ne j^\star}e^{z_{i,j}-z_{i,j^\star}}}
\;\ge\; \frac{1}{1+(T-1)e^{-\gamma}}.
\]
Rearranging $\frac{1}{1+(T-1)e^{-\gamma}}\ge 1-\varepsilon$ yields 
$\gamma \ge \log\!\big(\tfrac{(T-1)(1-\varepsilon)}{\varepsilon}\big)$.
\end{proof}

\begin{proof}[Proof of Proposition~\ref{prop:needle-signal} (Needle-signal bound)]
For any thinking token $t$,
\[
o_t=\sum_{j<t}\alpha_{t,j}v_j
= \alpha_{t,j^\star} v_{j^\star} + (1-\alpha_{t,j^\star}) \sum_{j\neq j^\star}\tilde\alpha_{t,j} v_j,
\quad
\tilde\alpha_{t,j}=\frac{\alpha_{t,j}}{1-\alpha_{t,j^\star}}.
\]
For any $u\in\mathbb{R}^{d_v}$, take inner products and upper bound the convex combination by its maximum term:
\[
\big\langle u,o_t\big\rangle
\le
\alpha_{t,j^\star}\,\langle u,v_{j^\star}\rangle
+
(1-\alpha_{t,j^\star})\,\max_{j\ne j^\star}\langle u,v_j\rangle.
\]
\end{proof}

\begin{proof}[Proof of Corollary~\ref{cor:needle-signal-eps} (Specialization under small margin)]
By Definition~\ref{def:retrieval}, $\gamma_t \le \log\!\big(\varepsilon/(1-\varepsilon)\big)$ iff $\alpha_{t,j^\star}\le \varepsilon$. Substitute $\alpha_{t,j^\star}\le\varepsilon$ in Proposition~\ref{prop:needle-signal} to obtain
\[
\langle u,o_t\rangle \le \varepsilon\langle u,v_{j^\star}\rangle + (1-\varepsilon)\max_{j\ne j^\star}\langle u,v_j\rangle.
\]
Moreover, Claim~\ref{lem:log-margin} implies $\alpha_{t,j^\star}\le 1/(1+ m e^{-\Delta})$ when at least $m$ distractors satisfy $z_{t,j}\ge z_{t,j^\star}-\Delta$, yielding the bound with $\varepsilon = 1/(1+m e^{-\Delta})$.
\end{proof}

\begin{proof}[Proof of Claim~\ref{claim:query-gradient} (Directional query update)]
With $z_{i,\ell}=\frac{q_i^\top k_\ell}{\sqrt{d_k}}$,
\[
\ell_i(q_i)=-\log \alpha_{i,j^\star}=-z_{i,j^\star}+\log\!\sum_{\ell=1}^T e^{z_{i,\ell}}.
\]
Differentiating w.r.t.\ $q_i$ and using $\frac{\partial z_{i,\ell}}{\partial q_i}=\frac{k_\ell}{\sqrt{d_k}}$,
\[
\nabla_{q_i}\ell_i
= -\frac{k_{j^\star}}{\sqrt{d_k}} + \frac{1}{\sum_{\ell'}e^{z_{i,\ell'}}}\sum_{\ell=1}^T e^{z_{i,\ell}}\frac{k_\ell}{\sqrt{d_k}}
= \frac{1}{\sqrt{d_k}}\Big(\sum_{\ell=1}^T \alpha_{i,\ell}k_\ell - k_{j^\star}\Big)
= \frac{1}{\sqrt{d_k}}(\mu_i - k_{j^\star}).
\]
Thus a descent step moves $q_i$ toward $k_{j^\star}$ and away from $\mu_i$.
\end{proof}

\begin{proof}[Proof of Lemma~\ref{lemma:margin-improvement} (Monotone margin improvement)]
Define $M_i(q_i)=-\ell_i(q_i)$. Then $\nabla M_i(q_i)=-\nabla \ell_i(q_i)$. 
For a step $q_i^+=q_i-\eta\nabla \ell_i(q_i)$, a first-order expansion gives
\[
M_i(q_i^+) = M_i(q_i) + \eta\|\nabla \ell_i(q_i)\|_2^2 + O(\eta^2).
\]
Using Claim~\ref{claim:query-gradient}, 
$\|\nabla_{q_i}\ell_i\|_2^2=\frac{1}{d_k}\|k_{j^\star}-\mu_i\|_2^2$, which is strictly positive unless $k_{j^\star}=\mu_i$. 
If $\nabla \ell_i$ is $L$-Lipschitz, choosing $\eta\in(0,1/L]$ ensures $M_i(q_i^+)\ge M_i(q_i)+\tfrac{\eta}{2}\|\nabla \ell_i(q_i)\|_2^2$.
\end{proof}

\para{Remarks on multi-head attention.}
All statements apply per head. Let superscript $h$ index heads and define per-head logits/weights $\{z^{(h)}_{i,j},\alpha^{(h)}_{i,j}\}$. Claims on dilution and margin hold headwise; aggregation across heads is via concatenation and an output projection, which preserves the directional and margin-improvement arguments by linearity.
\section{FLOP Derivations for \S\ref{subsec:flops-summary}}
\label{app:flops}
We outline FLOP models for two inference-time modes and derive the equivalence summarized in Eq.~\eqref{eq:think-vs-qttt}. Consider a dense Transformer with $L$ layers, hidden size $d$, MLP ratio $r$ (so $d_{\text{ff}}=r d$), and long context length $T$. Let $T_{\text{think}}$ be the number of autoregressively generated ``thinking'' tokens, $N_{\text{\shortmethod{}}}$ the number of query-only updates, and $k$ the span size per update.

\para{Cost coefficients.}
Ignoring lower-order terms (layer norms, biases), we collect the dominant costs as
\[
C_{\text{quad}} \;=\; 2L d \quad\text{(quadratic attention term)}, 
\qquad
C_{\text{tok}} \;=\; (4{+}2r)L d^2 \quad\text{(per-token projections/MLP)}.
\]
A parallel forward over $T$ tokens (the prefill) costs
\[
F_{\text{prefill}}(T) \;=\; C_{\text{quad}}\,T^2 \;+\; C_{\text{tok}}\,T.
\]

\para{Case A (autoregressive ``thinking'').}
After one prefill, generating $T_{\text{think}}$ tokens with a KV cache costs
\[
F_{\text{gen}}(T_{\text{think}};T)
\;=\;
C_{\text{quad}}\!\left(T_{\text{think}}\,T+\frac{T_{\text{think}}(T_{\text{think}}-1)}{2}\right)
\;+\;
C_{\text{tok}}\,T_{\text{think}},
\]
so the total is $F_A \;=\; F_{\text{prefill}}(T)+F_{\text{gen}}(T_{\text{think}};T)$.

\para{Case C (\method{}: query-only with cached K/V).}
With one prefill, each query-only pass recomputes queries for $k$ positions that attend to cached $\{K,V\}$ and backpropagates only into $\{W_Q\}$. The per-pass cost is
\[
G_{\text{partial}}(k;T)
\;\approx\;
2\Big(C_{\text{quad}}\,kT \;+\; (2{+}2r)L\,k\,d^2\Big),
\]
and the total is $F_C \;=\; F_{\text{prefill}}(T) + N_{\text{\shortmethod{}}}\,G_{\text{partial}}(k;T)$.
(If the span also attends within itself, add $+\,C_{\text{quad}}k^2$ and $+\,2Lk d^2$ inside $G_{\text{partial}}$, which are dominated by $kT$ when $k\!\ll\!T$.)

\para{Equivalence (A vs.\ C).}
Cancelling the shared prefill and equating $F_{\text{gen}}(T_{\text{think}};T)=N_{\text{\shortmethod{}}}\,G_{\text{partial}}(k;T)$ yields
\[
C_{\text{quad}}\!\left(T_{\text{think}}\,T+\tfrac{T_{\text{think}}(T_{\text{think}}-1)}{2}\right)+C_{\text{tok}}\,T_{\text{think}}
\;=\;
2N_{\text{\shortmethod{}}}\,k\Big(C_{\text{quad}}\,T+(2{+}2r)L d^2\Big).
\]
For long contexts with $T\!\gg\!d$ and spans $k\!\ll\!T$ (hence $T_{\text{think}}\!\ll\!T$ in matched regimes), the dominant terms give
\[
T_{\text{think}}
\;\approx\;
2\,N_{\text{\shortmethod{}}}\,k,
\]
which is Eq.~\eqref{eq:think-vs-qttt}. First-order corrections are $O\!\big(\tfrac{T_{\text{think}}}{T}\big)$ from the $\tfrac{T_{\text{think}}(T_{\text{think}}-1)}{2}$ term and $O\!\big(\tfrac{d}{T}\big)$ from $C_{\text{tok}}$.

\para{Sanity check (numeric instantiation).}
Take $L{=}32$, $d{=}4096$, $r{=}4$ (a $\sim$7B dense model) and $T{=}10^5$. If the application budget allows decoding $T_{\text{think}}{=}8{,}000$ thinking tokens after prefill, the matched query-only schedules include, e.g., $(N_{\text{\shortmethod{}}}{=}10,\,k{=}400)$ since $2\cdot 10\cdot 400\approx 8{,}000$. This reallocation keeps the KV cache length fixed at $T$ and spends the same FLOPs to reshape queries against the existing $\{K,V\}$ instead of growing the cache with additional tokens.
\section{Experimental Details}
\label{app:experimental-details}

\para{Models and tokenization.}
We evaluate Qwen3-\{1.7B, 4B, 8B\} with their native tokenizers and maximum supported context windows. All prompts use UTF-8, and inputs are delimited with explicit section headers (e.g., \texttt{[CONTEXT]}, \texttt{[QUESTION]}). Unless otherwise noted, we evaluate on the official validation/dev splits and follow each benchmark’s scoring script.

\para{Decoding and “Thinking’’ budget.}
We adopt model-recommended decoding parameters:
\emph{Thinking}: temperature=0.6, top-$p$=0.95, top-$k$=20; 
\emph{Non-thinking}: temperature=0.7, top-$p$=0.8, top-$k$=20.
We cap total generation length so that \emph{Thinking} consumes exactly $T_{\text{think}}$ intermediate tokens plus the final answer; for compute matching, we use $T_{\text{think}}=8192$ unless otherwise stated.
Self-consistency/best-of-$n$ are \emph{disabled} by default to keep FLOPs matched.

\para{Query-only TTT (\method{}) hyperparameters.}
We update only $W_Q$ in all attention layers using AdamW (weight decay 0.01) with a sweep over learning rates $\{3\mathrm{e}{-4}, 3\mathrm{e}{-5}, 1\mathrm{e}{-5}, 3\mathrm{e}{-6}, 1\mathrm{e}{-6}, 3\mathrm{e}{-7}\}$; we report the best per-dataset LR selected on a held-out portion of the validation set. Batch size is 1 (long contexts). We perform $N_{\text{TTT}}$ span updates of length $k$ with a single prefill/cached $\{K,V\}$; unless stated otherwise, $(k,N_{\text{TTT}})=(128,32)$, compute-matched to \emph{Thinking} via $T_{\text{think}}\approx 2N_{\text{TTT}}k$ (\S\ref{subsec:flops-summary}). Spans are sampled uniformly over $[1, T{-}k]$; gradient clipping at 1.0; bf16 precision.
Additionally, we perform a sensitivity analysis of \shortmethod{}
across learning rates.
Table~\ref{tab:lr_sensitivity} shows the variation of
accuracy on our synthetic tasks across context lengths.
We find that qTTT is not very sensitive to the choice of LR: the performance is relatively consistent between $[1\mathrm{e}{-5}, 1\mathrm{e}{-6}]$ and only falls on the extreme values of LR.
\color{black} %

\begin{table}[ht]
\centering
\caption{\label{tab:lr_sensitivity}
\textbf{Sensitivity to Learning Rate ($\eta$).} Performance of qTTT across varying learning rates. Extreme rates cause instability (high $\eta$) or insufficient adaptation (low $\eta$), with the optimal range typically between $1\text{e-}6$ and $1\text{e-}5$.}
\begin{tabular}{l|c|c|c|c|c|c}
\toprule
\textbf{Task / Context} & \textbf{1e-4} & \textbf{3e-5} & \textbf{1e-5} & \textbf{3e-6} & \textbf{1e-6} & \textbf{3e-7} \\
\midrule
\multicolumn{7}{l}{\textit{Bank Transactions}} \\
512 & 4.2 & 26.5 & \textbf{28.0} & 27.2 & 26.8 & 15.5 \\
2,536 & 1.5 & 13.8 & \textbf{14.4} & 14.0 & 12.5 & 6.2 \\
5,120 & 0.8 & \textbf{10.0} & 9.2 & 8.5 & 7.8 & 3.5 \\
9,560 & 0.0 & 7.8 & \textbf{8.4} & 7.9 & 7.0 & 1.2 \\
\midrule
\multicolumn{7}{l}{\textit{OLMo Code Bugs}} \\
512 & 8.5 & 42.0 & 44.5 & \textbf{45.7} & 43.2 & 22.0 \\
2,050 & 5.1 & 38.5 & 40.2 & \textbf{41.6} & 39.5 & 18.5 \\
7,450 & 2.2 & 25.0 & \textbf{28.0} & 27.5 & 24.8 & 10.5 \\
10,000 & 1.0 & 18.2 & \textbf{20.2} & 19.5 & 17.8 & 5.2 \\
\bottomrule
\end{tabular}
\end{table}


\para{Evaluation metrics.}
We use official scripts per subset: EM/F1 or dataset-specific accuracy for QA; ROUGE-\{1,2,L\} or benchmark-provided summary metrics for summarization; multiple-choice accuracy for \texttt{QAuLITY}. When a subset defines both EM and F1, we report the primary metric specified by the benchmark.

\para{Prompts and templates.}
Below we provide the base non-thinking and thinking templates used per task family. All runs share the same template within a family across methods; \emph{Thinking} adds a scratchpad section but the final answer must appear after a \texttt{Final:} tag.

\textit{Non-thinking (base)}
\begin{verbatim}
[SYSTEM]
You are a careful assistant. Use only the provided context.
If the answer is not supported, output "unknown".
[TASK]
{TASK_DESCRIPTION}    # e.g., short answer QA / summary / MCQ
[CONTEXT]
{CONTEXT_BLOCKS}     # e.g., {DOCUMENTS}|{DIALOGUE}|{CODE}|{TABLE}
[QUESTION or INSTRUCTION]
{QUESTION_OR_INSTRUCTION}     # prompt for the required output
[CONSTRAINTS]
[ANSWER]
\end{verbatim}

\textit{Thinking (base)}
\begin{verbatim}
[SYSTEM]
Reason privately in [SCRATCHPAD],
then provide a single final output after "Final:".
If not supported by the context, output "Final: unknown".
[TASK]
{TASK_DESCRIPTION}
[CONTEXT]
{CONTEXT_BLOCKS}
[QUESTION or INSTRUCTION]
{QUESTION_OR_INSTRUCTION}
[SCRATCHPAD]
...    # hidden chain-of-thought tokens (capped to T_think)
[FINAL]
Final:
\end{verbatim}

\para{Post-processing and extraction.}
For “thinking’’ runs, we extract the substring after \texttt{Final:} (trim, strip quotes). For MCQ, we regex-match \texttt{[ABCD]}; for extractive QA, we normalize punctuation/whitespace (SQuAD-style). For summarization, we truncate to the requested budget (e.g., 200 words) and use the benchmark scorer verbatim.

\para{Compute matching and seeds.}
Unless otherwise specified, \emph{Thinking} uses $T_{\text{think}}=8192$ and \method{} uses $(k,N_{\text{TTT}})=(128,32)$ so that $T_{\text{think}}\approx 2N_{\text{TTT}}k$. We fix the random seed for span sampling and decoding across methods per run; results are averaged over one run per configuration (low variance in our setting).

\section{Score Dilution Evidence on Long Contexts}
\label{app:score-dilution}

\para{Motivation.}
Long-context failures could be a result of a multitude of reasons
and design choices.
Past literature in long-context modeling has primarily
focused on tuning positional encoding to improve
long-context abilities.
Here we present some evidence supporting our claim
that \emph{score dilution}
is one of the primary reasons for long-context failure.
We show that as the context grows,
attention mass on the target collapses,
and accuracy falls
even when rotary position embeddings (RoPE)
are present and the model is not changed otherwise.
We further show that qTTT counteracts this collapse
suggesting that our approach
actually counteracts score dilution in practice.

\para{Experimental setting (RoPE ablation).}
We evaluate Qwen3-4B on two tasks (Bank Transactions; OLMo Code Bugs) under three test-time regimes:
(1) \emph{Thinking-only} with a fixed thinking budget (4k or 8k tokens), (2) \emph{qTTT (ours)} with a brief
query-only adaptation while reusing the prefetched KV cache, and (3) a \emph{No-RoPE} ablation where we disable
rotary phase application to $Q/K$ at inference (identity rotation), keeping all weights, prompts, and budgets
unchanged and without any additional fine-tuning. This isolates the role of positional encoding while holding
training and data fixed.

\para{Attention-mass metric.}
For each decode step $t$, layer $\ell$, and head $h$, let $A^{(\ell,h)}_{t,\tau}$ denote the softmax attention
from the current query to context position $\tau$. Given a labeled set of target indices $\mathcal{T}$, we define
the \emph{attention mass} at step $t$ as $\sum_{\tau \in \mathcal{T}} A^{(\ell,h)}_{t,\tau}$, then average over
all layers and heads; for multi-token answers we average over their output steps. We report mean~$\pm$~std across
multiple runs.

\para{Findings.}
Tables~\ref{tab:bank-rope-ablation} and \ref{tab:olmo-rope-ablation} show that thinking-only accuracy and attention
mass both decay sharply with context length. Disabling RoPE accelerates this collapse (lower mass and accuracy),
but \emph{even with} RoPE the decline is substantial. In contrast, {qTTT sustains markedly higher attention
mass as context grows and correspondingly improves accuracy. These results support the view that score dilution, rather than
training-data scarcity alone, is the dominant failure mode in these settings.

\begin{table*}[t]
\centering
\small
\caption{Bank Transactions (Qwen3-4B): Accuracy (\%) and attention mass vs.\ context length with and without RoPE, and with qTTT.}
\begin{tabular}{lcccccc}
\toprule
\textbf{Context Tokens} &
\multicolumn{2}{c}{\textbf{Thinking (RoPE)}} &
\multicolumn{2}{c}{\textbf{Thinking (No-RoPE)}} &
\multicolumn{2}{c}{\textbf{qTTT (Ours)}}\\
\cmidrule(lr){2-3}\cmidrule(lr){4-5}\cmidrule(lr){6-7}
& \textbf{Acc} & \textbf{Mass} & \textbf{Acc} & \textbf{Mass} & \textbf{Acc} & \textbf{Mass}\\
\midrule
512   & 36.00 & $0.46 \pm 0.04$ & 34.00 & $0.44 \pm 0.04$ & 28.00 & $0.42 \pm 0.06$ \\
2{,}536 &  6.00 & $0.22 \pm 0.03$ &  5.00 & $0.20 \pm 0.02$ & 14.40 & $0.41 \pm 0.08$ \\
5{,}120 &  2.50 & $0.11 \pm 0.02$ &  0.80 & $0.03 \pm 0.01$ & 10.00 & $0.36 \pm 0.09$ \\
9{,}560 &  1.00 & $0.04 \pm 0.01$ &  0.50 & $0.01 \pm 0.00$ &  8.40 & $0.25 \pm 0.09$ \\
\bottomrule
\end{tabular}
\label{tab:bank-rope-ablation}
\end{table*}

\begin{table*}[t]
\centering
\small
\caption{OLMo Code Bugs (Qwen3-4B): Accuracy (\%) and attention mass vs.\ context length with and without RoPE, and with qTTT.}
\begin{tabular}{lcccccc}
\toprule
\textbf{Context Tokens} &
\multicolumn{2}{c}{\textbf{Thinking (RoPE)}} &
\multicolumn{2}{c}{\textbf{Thinking (No-RoPE)}} &
\multicolumn{2}{c}{\textbf{qTTT (Ours)}}\\
\cmidrule(lr){2-3}\cmidrule(lr){4-5}\cmidrule(lr){6-7}
& \textbf{Acc} & \textbf{Mass} & \textbf{Acc} & \textbf{Mass} & \textbf{Acc} & \textbf{Mass}\\
\midrule
512   & 50.00 & $0.64 \pm 0.05$ & 47.40 & $0.61 \pm 0.05$ & 45.70 & $0.58 \pm 0.06$ \\
2{,}050 & 21.60 & $0.38 \pm 0.07$ & 16.20 & $0.29 \pm 0.04$ & 41.60 & $0.51 \pm 0.08$ \\
7{,}450 & 17.20 & $0.26 \pm 0.06$ & 10.60 & $0.14 \pm 0.02$ & 28.00 & $0.42 \pm 0.09$ \\
10{,}000 & 10.00 & $0.12 \pm 0.03$ &  3.00 & $0.04 \pm 0.01$ & 20.20 & $0.35 \pm 0.09$ \\
\bottomrule
\end{tabular}
\label{tab:olmo-rope-ablation}
\end{table*}
\section{ZeroScrolls and LongBench-v2: All models and subsets.}
\label{app:all_results}

This appendix reports the complete breakdowns for all benchmarks, models, and inference settings. We compare three modes—vanilla in-context, chain-of-thought “Thinking”, and our test-time training method (\shortmethod{})—for Qwen3-1.7B/4B/8B across LongBench-v2 and ZeroScrolls. Unless otherwise noted, higher is better and bold indicates the best within each row/condition.

Figure~\ref{fig:longbenchv2-overview} shows a FLOP-matched overview of LongBench-v2 results across its six domains. The detailed per-domain numbers that underlie this figure appear in Table~\ref{tab:longbench_v2_results}.
Figure~\ref{fig:zeroscrolls-overview} summarizes the observed results on ZeroScrolls. The complete per-dataset numbers, including retrieval-heavy and summarization tasks, are provided in Table~\ref{tab:zeroscrolls_results}.

Tables~\ref{tab:qwen32_longbench} and \ref{tab:qwen32_zeroscrolls} shows results on the Qwen3-32B model. We see that similar trends hold across subsets of the two datasets, validating the efficacy of \shortmethod{} across model sizes.

\begin{figure*}[t]
    \centering
    \begin{minipage}{\textwidth}\centering
    \includegraphics[width=\linewidth]{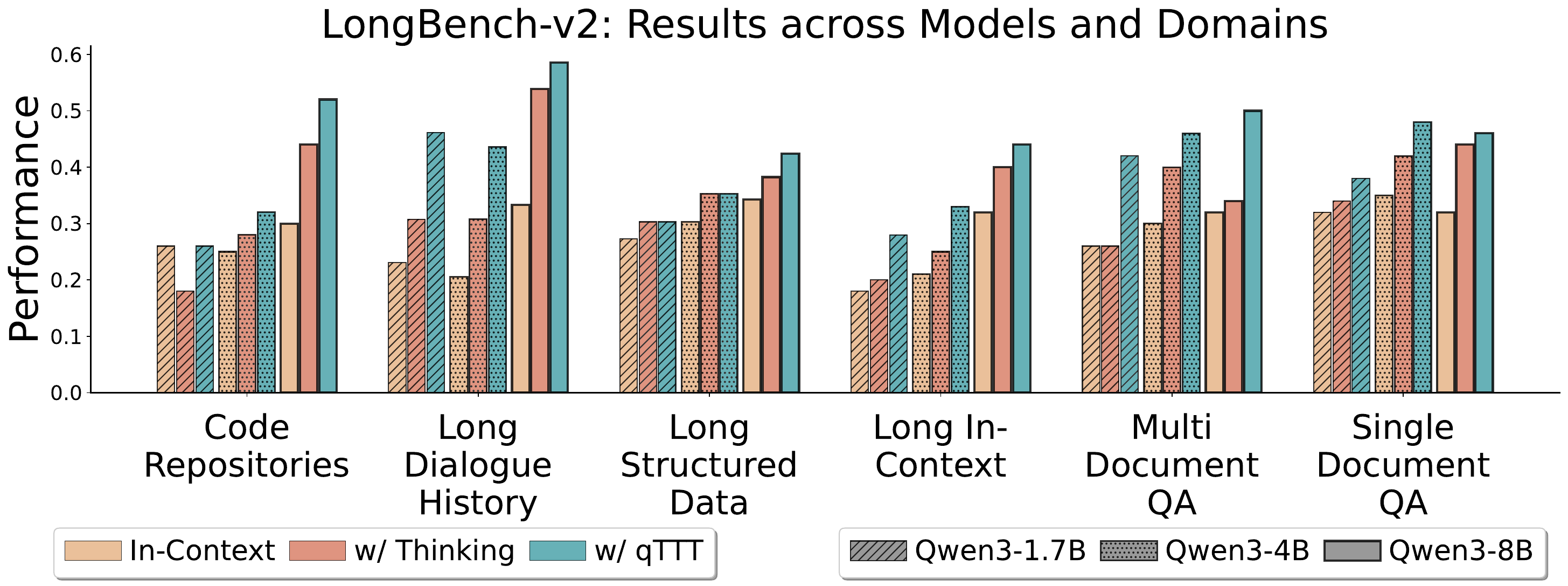}
    \end{minipage}
    \caption{FLOP-matched comparison on \textbf{LongBench-v2} \citep{bai2023longbench} across six domains for Qwen3-$1.7$B/$4$B/$8$B under vanilla in-context only, with thinking (CoT), and with test-time training (TTT). TTT consistently yields the best accuracy across domains and model sizes, with the largest gains on long-dialogue and document-QA tasks, and benefits growing with model size.}
    \label{fig:longbenchv2-overview}
\end{figure*}


\begin{table}[t]
\centering
\caption{Full \textbf{LongBench-v2} results for Qwen3-1.7B/4B/8B under In-context, Thinking, and \shortmethod{}. Scores follow benchmark-defined metrics; bold marks the best within each row/condition.}
\label{tab:longbench_v2_results}
\resizebox{\textwidth}{!}{%
\begin{tabular}{lccccccccc}
\toprule
\multirow{2}{*}{\textbf{}} & \multicolumn{3}{c}{\textbf{Qwen3-1.7B}} & \multicolumn{3}{c}{\textbf{Qwen3-4B}} & \multicolumn{3}{c}{\textbf{Qwen3-8B}} \\
\cmidrule(lr){2-4}\cmidrule(lr){5-7}\cmidrule(lr){8-10}
 & \textbf{In-context} & \textbf{Thinking} & \textbf{\shortmethod{}} & \textbf{In-context} & \textbf{Thinking} & \textbf{\shortmethod{}} & \textbf{In-context} & \textbf{Thinking} & \textbf{\shortmethod{}} \\
\midrule
Code Repositories      & \textbf{26.0} & 18.0 & \textbf{26.0} & 25.0 & 28.0 & \textbf{32.0} & 30.0 & 44.0 & \textbf{52.0} \\
Long Dialogue History  & 23.1 & 30.8 & \textbf{46.2} & 20.5 & 30.8 & \textbf{43.6} & 33.3 & 53.8 & \textbf{58.5} \\
Long Structured Data   & 27.3 & \textbf{30.3} & \textbf{30.3 }& 30.3 & \textbf{35.3} & \textbf{35.3} & 34.3 & 38.2 & \textbf{42.4} \\
Long In-Context        & 18.0 & 20.0 & \textbf{28.0} & 21.0 & 25.0 & \textbf{33.0} & 32.0 & 40.0 & \textbf{44.0} \\
Multi-Document QA      & 26.0 & 26.0 & \textbf{42.0 }& 30.0 & 40.0 & \textbf{46.0} & 32.0 & 34.0 & \textbf{50.0} \\
Single-Document QA     & 32.0 & 34.0 & \textbf{38.0 }& 35.0 & 42.0 & \textbf{48.0} & 32.0 & 44.0 & \textbf{46.0} \\
\midrule
\textbf{Average}       & 25.4 & 26.5 & \textbf{35.1} & 27.0 & 33.5 & \textbf{39.6} & 32.3 & 42.3 & \textbf{48.8} \\
\bottomrule
\end{tabular}%
}
\end{table}

\begin{figure*}[t]
    \centering
    \begin{minipage}{\textwidth}\centering
    \includegraphics[width=\linewidth]{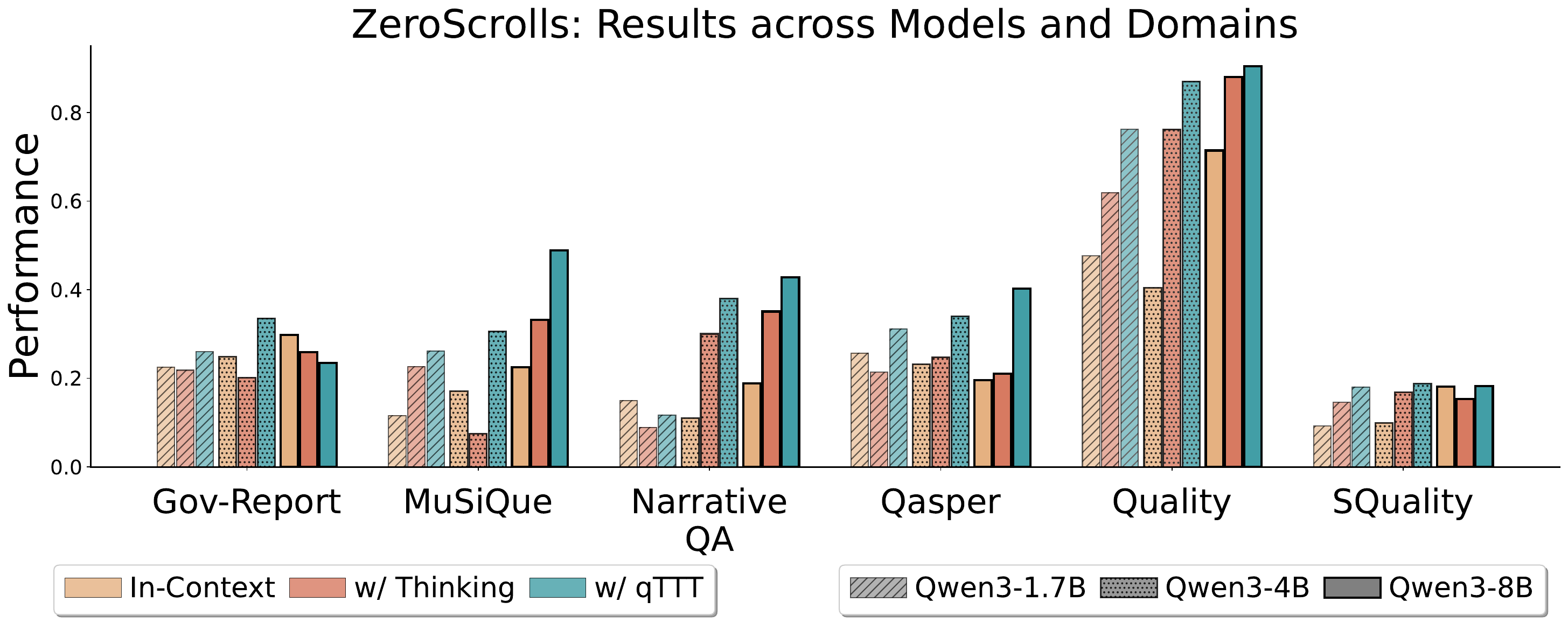}
    \end{minipage}
    \caption{FLOP-matched comparison on the \textbf{ZeroScrolls} benchmark \citep{shaham2023zeroscrolls} for Qwen3-1.7B/4B/8B under long contexts, with thinking (CoT), and with test-time training (TTT). TTT achieves the highest scores on nearly all datasets—especially on the retrieval-focused tasks, with a general increase with model size.
    }
    \label{fig:zeroscrolls-overview}
\end{figure*}


\begin{table}[t]
\centering
\caption{Full \textbf{ZeroScrolls} results across eight datasets for Qwen3-1.7B/4B/8B under In-context, Thinking, and \shortmethod{}. Datasets span retrieval and summarization; bold marks the best within each row/condition (higher is better).}
\label{tab:zeroscrolls_results}
\resizebox{\textwidth}{!}{%
\begin{tabular}{lccccccccc}
\toprule
\multirow{2}{*}{\textbf{}} & \multicolumn{3}{c}{\textbf{Qwen3-1.7B}} & \multicolumn{3}{c}{\textbf{Qwen3-4B}} & \multicolumn{3}{c}{\textbf{Qwen3-8B}} \\
\cmidrule(lr){2-4}\cmidrule(lr){5-7}\cmidrule(lr){8-10}
 & \textbf{In-context} & \textbf{Thinking} & \textbf{\shortmethod{}} & \textbf{In-context} & \textbf{Thinking} & \textbf{\shortmethod{}} & \textbf{In-context} & \textbf{Thinking} & \textbf{\shortmethod{}} \\
\midrule
GovReport     & 22.5 & 21.8 & \textbf{26.0} & 24.9 & 20.2 & \textbf{33.5} & 22.0 & 17.8 & \textbf{29.8} \\
MuSiQue       & 11.6 & 22.6 & \textbf{26.2} & 17.1 & 7.5  & \textbf{30.5} & 22.5 & 43.9 & \textbf{48.9} \\
NarrativeQA   & 15.0 & 8.9  & \textbf{11.7} & 11.0 & 30.0 & \textbf{38.0} & 18.9 & 35.1 & \textbf{42.8} \\
QASPER        & 25.7 & 21.4 & \textbf{31.1} & 23.2 & 24.7 & \textbf{34.0} & 19.6 & 21.1 & \textbf{26.1} \\
QMSum         & 6.2  & 7.5  & \textbf{9.5}  & \textbf{10.9} & 7.7  & 8.6  &\textbf{ 9.8}  & 8.6  & 8.6  \\
QUALITY       & 47.6 & 61.9 & \textbf{76.2} & 40.5 & 76.2 & \textbf{87.0} & 71.4 & 90.5 & \textbf{94.5} \\
SQuALITY      & 9.2  & 14.6 & \textbf{18.0} & 9.9  & 16.8 & \textbf{18.7} & 18.1 & 15.3 & \textbf{18.3} \\
SummScreen-FD & 8.2  & 7.2  & \textbf{7.4}  & \textbf{9.9}  & 8.3  & \textbf{9.9}  & \textbf{10.4} & 7.9  & 7.9  \\
\midrule
\textbf{Average} & 18.3 & 20.7 & \textbf{25.8} & 18.4 & 23.9 & \textbf{32.5} & 24.1 & 30.0 & \textbf{34.6} \\
\bottomrule
\end{tabular}%
}
\end{table}

\begin{table}[ht]
\centering
\small
\caption{\textbf{Qwen3-32B on LongBench-v2.} Comparison of In-context, Thinking, and \shortmethod{}. These findings demonstrate that that the improvements with \shortmethod{} hold across model scales.}
\label{tab:qwen32_longbench}
\begin{tabular}{lccc}
\toprule
 & \textbf{In-context} & \textbf{Thinking} & \textbf{qTTT} \\
\midrule
Code Repositories & 36.00 & 61.00 & \textbf{74.00} \\
Long In-Context & 44.00 & 56.00 & \textbf{57.00} \\
Long Structured Data & 39.30 & 42.20 & \textbf{51.50} \\
Long Dialogue History & 47.10 & \textbf{77.90} & 75.50 \\
Multi Document QA & 35.00 & 41.00 & \textbf{56.00} \\
Single Document QA & 36.00 & 47.00 & \textbf{49.00} \\
\bottomrule
\end{tabular}
\end{table}

\begin{table}[ht]
\centering
\small
\caption{\textbf{Qwen3-32B on ZeroScrolls.} Comparison of In-context, Thinking, and \shortmethod{}. These findings demonstrate that that the improvements with \shortmethod{} hold across model scales.}
\label{tab:qwen32_zeroscrolls}
\begin{tabular}{lccc}
\toprule
 & \textbf{In-context} & \textbf{Thinking} & \textbf{qTTT} \\
\midrule
Gov Report & \textbf{26.70} & 24.80 & 26.00 \\
Musique & 28.90 & 54.90 & \textbf{59.20} \\
Narrative QA & 27.70 & 42.40 & \textbf{49.60} \\
Qasper & 24.10 & 35.00 & \textbf{42.40} \\
QMSum & \textbf{11.90} & 9.90 & 10.80 \\
\bottomrule
\end{tabular}
\end{table}

\section{Additional Test-Time Scaling Baselines}
\label{app:tts-baselines}

\para{Baselines.}
We compare \textbf{Best-of-$N$ (BoN)} and \textbf{Beam Search} to our method under strict compute parity.
\emph{BoN / Self-Consistency (SC-$N$):} we run $N$ independent decodes, each with an equal share of the extra
reasoning budget, and select the final answer by majority vote (ties broken by sequence log-prob).
\emph{Beam-$k$:} we run left-to-right beam search of width $k$; to enforce parity with other test-time scaling,
the \emph{total} added ``thinking'' tokens across all beams is fixed.

\para{Design choices (strict matching).}
We match all methods to a fixed extra budget corresponding to $T_\text{think}=8192$ tokens beyond the vanilla decode.
SC-$N$ allocates \(\approx 8192/N\) tokens to each sample; Beam-$k$ allocates \(\approx 8192/k\) tokens per beam.
All results use the same prompt, output length (128 tokens); latencies are reported
separately in \S\ref{app:latency}. This protocol removes budget-induced confounders and isolates the effect of
test-time scaling itself.

\para{Conclusion.}
Across both LongBench-v2 and ZeroScrolls (Qwen3-4B), qTTT is competitive with or better than strictly
FLOP-matched BoN and Beam. SC-$N$ helps when single-run accuracy is already high (e.g., \textit{Single Document QA},
\textit{QUALITY}), but often degrades when the per-sample accuracy is below 50\%. Beam-$k$ provides only modest gains
under equal budgets due to correlated beams and imperfect ranking, and frequently trails qTTT.

\begin{table*}[t]
\centering
\small
\caption{LongBench-v2 (Qwen3-4B): Strict FLOP-matched test-time scaling. Numbers are accuracies (\%). SC-$N$ uses $8192/N$ tokens per sample; Beam-$k$ uses $8192/k$ tokens per beam.}
\resizebox{\textwidth}{!}{%
\begin{tabular}{lcccccccc}
\toprule
\textbf{Task} & \textbf{Thinking} & \textbf{qTTT} & \textbf{SC-8} & \textbf{SC-16} & \textbf{SC-32} & \textbf{Beam-8} & \textbf{Beam-16} & \textbf{Beam-32} \\
\midrule
Code Repositories    & 28.0 & 32.0 & 30.5 & 18.4 &  5.2 & 27.5 & 15.1 &  4.8 \\
Long In-Context      & 25.0 & 33.0 & 28.5 & 20.1 &  8.5 & 26.0 & 18.5 &  7.2 \\
Long Structured Data & 35.3 & 35.3 & 36.1 & 30.5 & 12.2 & 34.8 & 28.1 & 11.0 \\
Long Dialogue History& 30.8 & 43.6 & 34.2 & 31.0 & 15.5 & 29.5 & 25.2 & 12.0 \\
Multi Document QA    & 40.0 & 46.0 & 44.5 & 41.2 & 25.8 & 39.8 & 35.5 & 22.1 \\
Single Document QA   & 42.0 & 48.0 & 45.5 & 49.2 & 51.8 & 43.5 & 44.2 & 41.0 \\
\midrule
\textbf{Avg.}        & 33.5 & \textbf{39.7} & 36.6 & 31.7 & 19.8 & 33.5 & 27.8 & 16.4 \\
\bottomrule
\end{tabular}}
\label{tab:tts-longbench}
\end{table*}

\begin{table*}[t]
\centering
\small
\caption{ZeroScrolls (Qwen3-4B): Strict FLOP-matched test-time scaling. Numbers are accuracies (\%). SC-$N$ uses $8192/N$ tokens per sample; Beam-$k$ uses $8192/k$ tokens per beam.}
\resizebox{\textwidth}{!}{%
\begin{tabular}{lcccccccc}
\toprule
\textbf{Task} & \textbf{Thinking} & \textbf{qTTT} & \textbf{SC-8} & \textbf{SC-16} & \textbf{SC-32} & \textbf{Beam-8} & \textbf{Beam-16} & \textbf{Beam-32} \\
\midrule
gov report       & 20.2 & 33.5 & 24.5 & 15.2 &  2.1 & 22.8 & 12.5 &  1.8 \\
musique          &  7.5 & 30.5 & 18.2 & 12.5 &  4.5 & 14.5 &  9.8 &  3.2 \\
narrative qa     & 30.0 & 38.0 & 35.5 & 32.0 & 22.5 & 31.2 & 25.5 & 18.1 \\
qasper           & 24.7 & 34.0 & 28.5 & 22.1 & 10.5 & 26.5 & 19.5 &  9.2 \\
qmsum            &  7.7 &  8.6 &  9.2 &  5.1 &  0.8 &  8.5 &  4.5 &  0.7 \\
quality          & 76.2 & 87.0 & 82.5 & 85.1 & 84.5 & 78.5 & 76.2 & 70.1 \\
squality         & 16.8 & 18.7 & 17.5 & 19.2 & 20.5 & 17.1 & 17.5 & 17.8 \\
summ screen fd   &  8.3 &  9.9 &  9.5 &  6.5 &  1.2 &  8.8 &  5.5 &  1.1 \\
\midrule
\textbf{Avg.}    & 23.9 & \textbf{32.5} & 28.2 & 24.7 & 18.3 & 26.0 & 21.4 & 15.3 \\
\bottomrule
\end{tabular}}
\label{tab:tts-zeroscrolls}
\end{table*}
\section{Latency and Compute-Matched Measurements}
\label{app:latency}

\para{Setup.}
All latency numbers were measured on a single NVIDIA A100 GPU in standard inference mode.
We report wall-clock time in seconds (mean $\pm$ std) for three different context lengths.
For a given model size and context length,
we perform latency analysis based
on the amount of FLOPs, $F_{qTTT}$,
it takes to run $N_{qTTT}=32$
steps for $k=128$ on a single evaluation example.
We report the following metrics:

\begin{itemize}
    \item $N_{\text{think}}$: Number of thinking tokens
    that can be generated to match $F_{qTTT}$ FLOPs.

    \item $N_{\text{BoN}}$: Number of best-of-N trajectories
    that can be generated to match $F_{qTTT}$ FLOPs.

    \item $t_{\text{ICL}}$: Wall-clock time for a vanilla in-context pass on single example. This roughly corresponds to the prefill time.

    \item $t_{\text{think}}$: Wall-clock time to generate
    $N_{\text{think}}$ tokens, given a single example.

    \item $t_{\text{BoN}}$: The amount of time to compute best-of-N via self-consistency for $N_{\text{BoN}}$ trajectories given a single example.

    \item $t_{\text{qTTT}}$: The amount of time to perform $N_{qTTT}=32$ steps of qTTT steps with span length $k=128$
    for a single example.
\end{itemize}

\begin{table*}[t]
\centering
\small
\caption{Latency and wall-clock time comparisons given a fixed FLOP budget for Qwen3-1.7B.}
\begin{tabular}{lcccccc}
\toprule
\textbf{Context Length} & \textbf{$t_{\text{ICL}}$ (s)} & \textbf{$t_{\text{qTTT}}$ (s)} & \textbf{$t_{\text{think}}$ (s)} & \textbf{$t_{\text{BoN}}$ (s)} & \textbf{$N_{\text{think}}$} & \textbf{$N_{\text{BoN}}$}\\
\midrule
8,000     & $8.73 \pm 0.35$  & $16.92 \pm 0.68$ & $16.93 \pm 0.68$ & $16.05 \pm 0.64$ & 1{,}434 & 11 \\
32,000    & $34.93 \pm 1.40$ & $43.12 \pm 1.72$ & $43.11 \pm 1.72$ & $40.78 \pm 1.63$ & 358 & 3 \\
128,000   & $139.70 \pm 5.59$& $147.89 \pm 5.92$& $147.93 \pm 5.92$& $139.70 \pm 5.59$& 90 & 1 \\
\bottomrule
\end{tabular}
\label{tab:latency-1p7b}
\end{table*}

\begin{table*}[t]
\centering
\small
\caption{Latency and wall-clock time comparisons given a fixed FLOP budget for Qwen3-4B.}
\begin{tabular}{lcccccc}
\toprule
\textbf{Context Length} & \textbf{$t_{\text{ICL}}$ (s)} & \textbf{$t_{\text{qTTT}}$ (s)} & \textbf{$t_{\text{think}}$ (s)} & \textbf{$t_{\text{BoN}}$ (s)} & \textbf{$N_{\text{think}}$} & \textbf{$N_{\text{BoN}}$}\\
\midrule
8{,}000   & $14.61 \pm 0.58$  & $28.27 \pm 1.13$ & $28.26 \pm 1.13$ & $27.41 \pm 1.10$ & 1{,}365 & 11 \\
32{,}000  & $58.45 \pm 2.34$  & $72.11 \pm 2.88$ & $72.09 \pm 2.88$ & $68.69 \pm 2.75$ & 341 & 3 \\
128{,}000 & $233.81 \pm 9.35$ & $247.47 \pm 9.90$ & $247.41 \pm 9.90$ & $233.81 \pm 9.35$ & 85 & 1 \\
\bottomrule
\end{tabular}
\label{tab:latency-4b}
\end{table*}

\begin{table*}[t]
\centering
\small
\caption{Latency and wall-clock time comparisons given a fixed FLOP budget for Qwen3-8B.}
\begin{tabular}{lcccccc}
\toprule
\textbf{Context Length} & \textbf{$t_{\text{ICL}}$ (s)} & \textbf{$t_{\text{qTTT}}$ (s)} & \textbf{$t_{\text{think}}$ (s)} & \textbf{$t_{\text{BoN}}$ (s)} & \textbf{$N_{\text{think}}$} & \textbf{$N_{\text{BoN}}$}\\
\midrule
8{,}000   & $22.13 \pm 0.89$  & $42.61 \pm 1.70$ & $42.62 \pm 1.70$ & $41.33 \pm 1.65$ & 1{,}229 & 10 \\
32{,}000  & $88.53 \pm 3.54$  & $109.01 \pm 4.36$ & $109.00 \pm 4.36$ & $97.07 \pm 3.88$ & 307 & 2 \\
128{,}000 & $354.13 \pm 14.17$ & $374.61 \pm 14.98$ & $374.67 \pm 14.99$ & $354.13 \pm 14.17$ & 77 & 1 \\
\bottomrule
\end{tabular}
\label{tab:latency-8b}
\end{table*}

Tables~\ref{tab:latency-1p7b},~\ref{tab:latency-4b},~\ref{tab:latency-8b}
show the results of the measurements on
Qwen3-1.7B, 4B, and 8B, respectively.
We find that the wall-clock time
for all three test-time compute strategies---qTTT,
thinking, and best-of-N---is quite similar.
We also note that prefilling the KV cache,
which is approximately equal to $t_{\text{ICL}}$
dominates most of the decoding time,
especially for longer sequence lengths.
This motivates the frozen K/V attention weights
in our setup, without which the prefill
would need to be recomputed with every training step.


\end{document}